\documentclass[10pt,dvipsnames]{article} 

\PassOptionsToPackage{numbers, sort&compress}{natbib}
\usepackage[preprint]{tmlr}

\usepackage[colorlinks=true,citecolor=BlueViolet,linkcolor=Blue]{hyperref} 
\usepackage{mpemath}
\RequirePackage[capitalise,noabbrev]{cleveref}
\usepackage{graphicx} 

\SetKwInOut{Input}{Input}
\SetKwInOut{Output}{Output}

\usepackage{booktabs}

\newcommand{\Nats}{\mathbb{N}}

\DeclareMathOperator{\kl}{KL}
\DeclareMathOperator{\ER}{DR}
\DeclareMathOperator{\ess}{ess}

\DeclareMathOperator{\qvar}{QuickVaR}
\DeclareMathOperator{\ews}{EWS}

\DeclareMathOperator*{\argmin}{arg\,min}

\newcommand{\quant}{{\mathfrak{q}}}

\renewcommand{\cite}{\citep}

\IncMargin{2em}

\title{Computing Monetary Risk Measures in Linear Time} 
\author{Palash Agrawal\thanks{Equal contribution.}, Gersi Doko\footnotemark[1], Maeve Burwell, Marek Petrik\footnotemark[1]}
\date{}

\begin{document}

\maketitle

\begin{abstract}
Monetary risk measures have gained popularity for expressing decision-makers' risk aversion. Value-at-Risk (VaR) and Conditional-Value-at-Risk (CVaR), in particular, are used commonly for this purpose. This paper proposes new efficient algorithms to compute these risk measures for a discrete random variable in expected linear time with respect to the size of its domain. First, we propose a QuickVaR algorithm that computes the VaR of a discrete random variable. Then, we leverage QuickVaR to propose QuickDivergence, an algorithm for computing a class of $\varphi$-divergence risk measures, including the popular CVaR risk measure. The QuickVaR algorithm adapts the well-known Quickselect algorithm, while QuickDivergence builds on polymatroid optimization algorithms. Numerical results show that our new algorithms offer an order-of-magnitude speedup for large domains, and a library implementation of the algorithms is available at \url{https://github.com/RiskAverseRL/RiskMeasures.jl}.     
\end{abstract}

\section{Introduction}\label{sec:intro}

Monetary measures of risk generalize the expectation operator to be able to represent the risk aversion of decision-makers in domains that range from robotics~\cite{akella2024risk,majumdar2020should,benrabah2024review} to finance~\cite{Follmer2016} to infrastructure maintenance~\cite{Inzunza2016} to disaster response~\cite{Barahona2013}. Value-at-Risk~(VaR) and Conditional-Value-at-Risk~(CVaR) are the most popular risk measures and have made inroads in reinforcement learning, machine learning, and decision making~\cite{Petrik2024}. Machine learning and reinforcement learning algorithms typically must compute the risk measures quickly for large discrete random variables. For example, in risk-averse reinforcement learning, one must evaluate the risk measure in every iteration of the algorithm, which can represent a significant computational bottleneck~\cite{Ho2021,Ho2018,Ho2022,Hau2023a,Hau2025}. 

In this paper, we propose linear-time algorithms for computing risk measures of discrete random variables. First, we develop \emph{QuickVaR}, an algorithm that computes VaR in linear time and generalizes the well-known quickselect algorithm. Second, we develop \emph{QuickDivergence}, an algorithm that computes, in linear time, $\varphi$-divergence risk measures~\cite{Ahmadi-Javid2012} that satisfy a specific piecewise linearity property. This algorithm reduces $\varphi$-divergence risk measures to linear minimization problems over polymatroids~\cite{Follmer2016}. We illustrate this algorithm for the well-known CVaR risk measure and a related TVaR risk measure. Both of these risk measures are in the $\varphi$-divergence family and satisfy the required linearity property.  

Our algorithms' time complexity improves on the state of the art by a logarithmic factor. Virtually all implementations of VaR and CVaR algorithms require sorting the input arrays. Unbounded sorting requires $\Omega(n \log n)$ operations where $n$ is the size of the probability space~\cite{Follmer2016}. Our numerical results show that the logarithmic acceleration can lead to an order-of-magnitude speedup for realistic large problems without incurring any penalty for small problems. In addition, our linear-time algorithms have simple implementations that add minimal complexity over standard algorithms.

Linear-time algorithms similar to our QuickVaR and QuickDivergence have been studied in many domains, and we summarize some of the most relevant. Some examples of such structured optimization problems are the projection onto the simplex~\cite{adam2019projectionscanonicalsimplexadditional, Wang2013Simplex}, projection onto the $k$-capped simplex~\cite{kcapped_ang,Lim2016}, linear optimization on the intersection~of $L_p$ or $\varphi$-divergence balls and the probability simplex in robust optimization~\cite{Ho2022}, optimization for $\varphi$-divergence risk measures~\cite{Ahmadi-Javid2012}, and on polymatroids~\cite{Korte2012}. While some of these papers discuss the possibility of designing linear-time algorithms, they usually do not study them, with a few exceptions~\cite{Condat2016,Lim2016}. Finally, QuickVaR is closely related to the computation of the weighted quantile but returns the same value only when the quantile is unique~\cite[chapter~9]{cormen2001introduction}. 

The remainder of the paper is organized as follows. \Cref{sec:preliminaries} describes the necessary background including risk measures, $\varphi$-divergence, and polymatroid optimization. Then, \Cref{sec:var-linear} proposes \emph{QuickVaR} to compute VaR in linear time. \Cref{sec:phi-divergence} builds on QuickVaR to develop \emph{QuickDivergence} that computes certain $\varphi$-divergence risk measures in linear time.  We evaluate the algorithms numerically on synthetic and realistic problems in \cref{sec:experiments}.

\section{Preliminaries}\label{sec:preliminaries}

\emph{Notation}: We restrict our attention to random variables defined on finite probability spaces $(\Omega, 2^{\Omega}, \bm{p})$ where $\Omega := \left\{ 1, \dots , n \right\}$, $\bm{p} \in \Delta_n$, and $\Delta_n$ is the $n$-dimensional probability simplex. We use a tilde to indicate a random variable, such as $\tilde{x}\colon \Omega \to \Real$. Vectors are denoted with a lower-case bold font, such as $\bm{x} \in \Real^n$ and indexed as $\bm{x}_i \text{ for } i = 1, \dots , n$. We use the function notation $\tilde{x}$ and vector notation $\bm{x}$ interchangeably, such that $x_\omega = x(\omega), \forall \,  \omega \in \Omega$. Sets are denoted with calligraphic letters, $\bar{\Real} := \Real \cup \left\{ -\infty, \infty \right\}$ denotes the extended real line, and $\Real_{\ge 0}$ are non-negative reals. The symbol $\Nats$ denotes all 0-based natural numbers and $a{:}b := a, a+1, \dots , b$ for $a \le b \in \mathbb{N}$ and the empty set otherwise. We use $\bm{x}_{a{:}b}$ to denote a subset of elements of $\bm{x}$ and comparisons such as $\bm{x}_{a{:}b} < y$ are interpreted element-wise. Finally, the symbol $\bm{1} \in \Real^n$ denotes a vector of all ones and $\bm{1}_{\mathcal{O}} \in \Real^n$ for $\mathcal{O} \subseteq \Omega$ is an indicator vector of the set $\mathcal{O}$.

\emph{Monetary risk measures} $\rho\colon \Delta_n \times \Real^n \to \bar{\Real}$ are functions of a probability measure and a random variable that are monotone and translation invariant~\cite{Follmer2016}. Risk measures generalize the expectation operator to account for uncertainty in a random variable's distribution. We apply risk measures to random variables representing rewards and suppose that higher risk values are preferable. For the sake of streamlining the notation, we omit the probability measure when it is obvious from the context and only write $\rho[\tilde{x}]$ to compute the risk value.

\emph{Value-at-Risk}~(VaR) is a popular risk measure and is defined for $\alpha\in [0,1)$ and $\tilde{x}\in \mathbb{R}^n$ as~\cite{Follmer2016,Shapiro2014}
\begin{equation}
  \label{eq:var-definition}
  \var{\alpha}{ \tilde{x} }
  \;:=\;
    \max\; \left\{\tau \in \bar{\Real} \mid \P{\tilde{x} < \tau}  \le \alpha\right\} =
    \max\; \left\{\tau \in \bar{\Real} \mid \P{\tilde{x} \ge  \tau}  \ge 1 - \alpha\right\},
\end{equation}
where the maximum exists because $\tau \mapsto \P{\tilde{x} < \tau}$ is lower semi-continuous. One can readily see that $\varo_0[\tilde{x}] = \ess \inf [\tilde{x}]$, and typically the risk measure is extended as $\varo_{1}[ \tilde{x} ] = \infty$.

\begin{table}
    \caption{Parameters for relevant $\varphi$-divergence risk measures for $\alpha\in (0,1)$.} \label{tab:phi-divergence-risk}
    \centering
    \begin{tabular}{l|c|c|c}
         \toprule
         Risk measure & $\varphi(y)$  & $\varphi\opt(x)$ & $\beta$ \\
         \midrule        
         $\mathbb{E}$           & $\begin{cases} 0 &\text{if } y = 1 \\ \infty & \text{otherwise}  \end{cases}$ & $x$ & $0$  \\
      \midrule
      $\cvaro_{\alpha}$     & $\begin{cases} 0 &\text{if } y \in [0, \frac{1}{\alpha}] \\ \infty & \text{otherwise}  \end{cases}$ & $\frac{1}{\alpha} \cdot \max \left\{ 0,  x \right\} $& $0$ \\
      \midrule
      $\evaro_{\alpha}$     & $\begin{cases} y \log y &\text{if } y > 0 \\ 0 &\text{if } y =  0 \\  \infty &\text{if } y < 0 \end{cases}$ & $\exp(x - 1)$ & $\log \frac{1}{\alpha}$ \\
      \midrule
         $\tvaro_{\alpha}$     & $|y - 1|$ & $ \begin{cases} x &\text{if } |x| \le 1\\ \infty &\text{otherwise} \end{cases} $& $\min \left\{  \sqrt{ 2 \log \frac{1}{\alpha} }, 2\right\}$ \\
         \bottomrule
    \end{tabular}
\end{table}

A \emph{$\varphi$-divergence risk measure} $\ER_{\varphi, \beta }\colon \mathbb{R}^n \to  \bar{\Real}$ is defined for $\varphi\colon \bar{\Real} \to \bar{\Real}$ and $\beta\in \Real_{\ge 0}$ as~\cite{Ahmadi-Javid2012}
\begin{align} 
  \ER_{\varphi, \beta}[ \tilde{x} ]
\label{eq:fdiv-primal}
  &:= \inf \left\{\bm{q}\tr \bm{x} \mid \bm{q}\in \Delta_n, H_{\varphi}(\bm{q} \| \bm{p}) \le \beta,
    \bm{q} \ll \bm{p}
    \right\}
    \\
  \label{eq:fdiv-dual}
  &= \sup_{z\in \Real} \left(  z - \inf_{t>0} t\cdot \E { \varphi\opt \left( \frac{z - \tilde{x}}{t} + \beta \right)} \right),
\end{align}
where $\varphi\opt(x)$ is the convex conjugate of $\varphi(y)$, $\varphi$ is a closed, convex function that satisfies $\varphi(1) = 0$, and $H_{\varphi}$ is a $\varphi$-divergence measure~\cite{Bayraksan2015}:
\[
  H_\varphi(\bm{q} \| \bm{p}) := \sum_{\omega \in \Omega} p_{\omega} \cdot \varphi\left(\frac{q_{\omega}}{p_{\omega}}\right),
\]
defined for $\bm{q} \ll \bm{p}$. Here, $\bm{q} \ll \bm{p}$ denotes that $\bm{q}$ is absolutely continuous with respect to $\bm{p}$, meaning that $p_{\omega} = 0 \implies q_{\omega} = 0$ for each $\omega\in 1{:}n$. We also adopt the common convention that $1 / 0 = \infty$ and $0 \cdot \infty = 0$~\cite{rockafellar1996variational}. It is important to note that $\varphi$-divergence risk measures are coherent, which means they are concave and positively homogeneous. Any coherent risk measure $\rho\colon \mathbb{R}^n\to \bar{\Real}$ satisfies that $\rho[\tilde{x}] = \inf_{\bm{q}\in\mathcal{Q}} \bm{q}\tr\bm{x}$ for some closed convex non-empty set $\mathcal{Q} \subseteq \Delta_n$ \citep[proposition~2.84]{Follmer2016}. 

\Cref{tab:phi-divergence-risk} summarizes several common $\varphi$-divergence risk measures; see \cref{sec:cvar-tvar-are} for the relevant definitions for $\alpha\in [0,1]$. We focus on the \emph{Conditional-Value-at-Risk}~(CVaR), which is a $\varphi$-divergence risk measure and defined equivalently for $\tilde{x} \in \mathbb{R}^n$ and $\alpha \in (0,1]$ as~\cite{Follmer2016}
\begin{equation} \label{eq:primal_cvar}
    \cvaro_\alpha[ \tilde{x} ]
    \; :=\; 
    \max_{z \in \Real} \left(z - \frac{1}{\alpha} \cdot \E{z -\tilde{x}}_+\right)
    = 
\min \left\{ \bm{x}\tr \bm{q} \mid \bm{q} \in \Delta_{n}, \, \bm{q} \le \alpha^{-1} \bm{p}
\right\},
\end{equation}
and extended continuously to $\cvaro_0[\tilde{x}] = \ess\inf[\tilde{x}]$.

We also study another $\varphi$-divergence risk measure, which we call \emph{Total-Value-at-Risk}~(TVaR) and which is defined for $\alpha\in (0,1]$ as
\begin{equation} \label{eq:tvar-dual}
  \tvaro_{\alpha}[\tilde{x}]
  \;:=\; 
\min \left\{ \bm{x}\tr \bm{q} \mid \bm{q} \in \Delta_{n}, \, \|\bm{p} - \bm{q}\|_1 \le \min \left\{  \sqrt{2 \log \frac{1}{\alpha }}, 2\right\}, \bm{q} \ll \bm{p}
\right\},
\end{equation}
and extended continuously to $\tvaro_{0}[\tilde{x}] = \ess\inf [\tilde{x}]$.

We also refer to the \emph{Entropic-Value-at-Risk}~(EVaR), which is the $\varphi$-divergence risk measure obtained from $\varphi(y) := y \log y$ for $y > 0$ and $\beta = \log \frac{1}{\alpha }$ for $\alpha > 0$ in \cref{tab:phi-divergence-risk}. Because the corresponding $\varphi$-divergence is the Kullback--Leibler divergence $H_{\varphi}(\bm{q} \| \bm{p}) = \kl(\bm{q} \| \bm{p}) = \sum_{\omega \in \Omega} q_{\omega} \log \frac{q_{\omega}}{p_{\omega}}$, the EVaR is equivalently defined for $\alpha\in (0,1]$ and $\tilde{x} \in \mathbb{R}^n$ as~\cite{Ahmadi-Javid2012}
\begin{equation} \label{eq:evar-dual}
  \evaro_{\alpha}[\tilde{x}]
  \;:=\;
  \min \left\{ \bm{x}\tr \bm{q} \mid \bm{q} \in \Delta_{n}, \, \kl(\bm{q} \| \bm{p}) \le \log \frac{1}{\alpha },
    \bm{q} \ll \bm{p}
  \right\},
\end{equation}
and extended continuously to $\evaro_{0}[\tilde{x}] = \ess\inf [\tilde{x}]$. 
  
\begin{figure}
  \centering
  \includegraphics[width=0.5\linewidth]{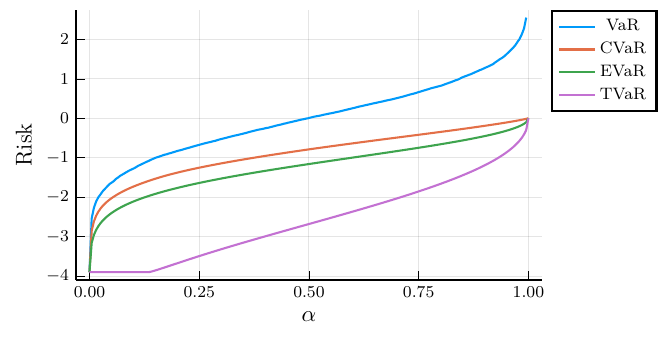}
  \caption{Comparison of risk measure values for a normal distribution with mean $0$ and standard deviation $1$ as a function of the risk level $\alpha$.}
  \label{fig:comparison}
\end{figure}
An important property of TVaR is that it bounds EVaR~\cite{Ahmadi-Javid2012,Hau2023a,Su2025} from below, as we show next and illustrate in \cref{fig:comparison}.
\begin{proposition}
For each $\tilde{x} \in \mathbb{R}^n$, the risk measures above extended to $\alpha\in [0,1]$ satisfy that
\[
  \var{\alpha}{\tilde{x}}
  \; \ge\; 
  \cvar{\alpha}{\tilde{x}}
  \; \ge\; 
  \evar{\alpha}{\tilde{x}}
  \; \ge\; 
  \tvar{\alpha}{\tilde{x}}.
\]
\end{proposition}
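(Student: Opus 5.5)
The plan is to prove the three inequalities one at a time for $\alpha\in(0,1)$, and then to check the endpoints $\alpha\in\{0,1\}$ directly from the stated extensions. The first inequality uses the dual (maximization) form of CVaR in \eqref{eq:primal_cvar}. The other two use the primal forms \eqref{eq:primal_cvar}, \eqref{eq:evar-dual}, \eqref{eq:tvar-dual} as minimizations of $\bm{x}\tr\bm{q}$ over sets of distributions. For those, I will show the feasible sets are nested, $\mathcal{Q}_{\mathrm{CVaR}}\subseteq\mathcal{Q}_{\mathrm{EVaR}}\subseteq\mathcal{Q}_{\mathrm{TVaR}}$. Minimizing over a larger set gives a smaller value, which yields the inequalities.

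\emph{VaR $\ge$ CVaR.} Let $\tau=\var{\alpha}{\tilde{x}}$. I will show that $z-\frac1\alpha\E{z-\tilde{x}}_+\le\tau$ for every $z$.
\begin{itemize}
\item If $z\le\tau$, this is immediate because the subtracted term is nonnegative.
\item If $z>\tau$, maximality of $\tau$ in \eqref{eq:var-definition} gives $\P{\tilde{x}<\tau'}>\alpha$ for all $\tau'>\tau$. Letting $\tau'\downarrow\tau$ gives $\P{\tilde{x}\le\tau}\ge\alpha$. Since $(z-\tilde{x})_+\ge(z-\tau)\bm{1}_{\{\tilde{x}\le\tau\}}$, we get $\E{z-\tilde{x}}_+\ge\alpha(z-\tau)$. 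Hence the objective is at most $z-(z-\tau)=\tau$.
\end{itemize}
Taking the maximum over $z$ gives the claim.

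\emph{CVaR $\ge$ EVaR.} Take $\bm{q}\in\Delta_n$ with $\bm{q}\le\alpha^{-1}\bm{p}$. Then $\bm{q}\ll\bm{p}$ holds automatically. Moreover, on the support of $\bm{q}$ we have $\log(q_\omega/p_\omega)\le\log\frac1\alpha$, so
\[
\kl(\bm{q}\|\bm{p})=\sum_\omega q_\omega\log\frac{q_\omega}{p_\omega}\le\log\frac1\alpha\sum_\omega q_\omega=\log\frac1\alpha .
\]
Hence every CVaR-feasible $\bm{q}$ is EVaR-feasible.

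\emph{EVaR $\ge$ TVaR.} For $\bm{q}$ feasible in \eqref{eq:evar-dual}, Pinsker's inequality gives $\|\bm{p}-\bm{q}\|_1\le\sqrt{2\kl(\bm{q}\|\bm{p})}\le\sqrt{2\log\frac1\alpha}$. Also $\|\bm{p}-\bm{q}\|_1\le 2$ always holds for two distributions. So $\bm{q}$ satisfies the TVaR radius $\min\{\sqrt{2\log\frac1\alpha},2\}$, and $\bm{q}\ll\bm{p}$ is inherited.

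\emph{Endpoints.} At $\alpha=0$, all four measures equal $\ess\inf[\tilde{x}]$ by the stated extensions, so equality holds. At $\alpha=1$:
\begin{itemize}
\item $\varo_1=\infty$ by the stated extension.
\item CVaR has constraint $\bm{q}\le\bm{p}$, which forces $\bm{q}=\bm{p}$ and gives $\mathbb{E}[\tilde{x}]$.
\item EVaR has $\kl\le0$, which forces $\bm{q}=\bm{p}$ and gives $\mathbb{E}[\tilde{x}]$.
\item TVaR has radius $0$, which forces $\bm{q}=\bm{p}$ and gives $\mathbb{E}[\tilde{x}]$.
\end{itemize}
So the chain holds at $\alpha=1$ as well.

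The main obstacle I expect is the VaR step. It requires handling the one-sided limit carefully, namely deriving $\P{\tilde{x}\le\tau}\ge\alpha$ from the maximality in the definition. The nested-set steps are routine once Pinsker's inequality is invoked.
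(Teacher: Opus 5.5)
Your proof is correct, and its skeleton matches the paper's: three pairwise inequalities. Your EVaR $\ge$ TVaR step is exactly the paper's argument: Pinsker's inequality relaxes the KL ball to an $L_1$ ball, and $\|\bm{p}-\bm{q}\|_1\le 2$ then gives the capped radius.

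The difference is in the first two links. The paper calls VaR $\ge$ CVaR well known and cites \citep[proposition~3.2]{Ahmadi-Javid2012} for CVaR $\ge$ EVaR. You prove both directly. The first comes from bounding $z-\frac{1}{\alpha}\E{z-\tilde{x}}_+\le\tau$ for every $z$, using $\P{\tilde{x}\le\tau}\ge\alpha$. The second comes from the density-ratio bound $q_\omega/p_\omega\le 1/\alpha\Rightarrow\kl(\bm{q}\|\bm{p})\le\log\frac{1}{\alpha}$, which places the CVaR feasible set inside the KL ball.

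Your version is self-contained, and it shows the last two inequalities as one picture of nested feasible sets: CVaR set $\subseteq$ KL ball $\subseteq$ $L_1$ ball. You also check $\alpha\in\{0,1\}$ explicitly. There the statement depends on the extensions (for instance, $\varo_1[\tilde{x}]=\infty$ is a convention, not a consequence of any formula), and the paper's proof leaves this implicit. The paper's proof is shorter because it leans on the literature; yours needs only the definitions and Pinsker's inequality.
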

\begin{proof}
 The inequality for VaR and CVaR is well known, and the inequality for CVaR and EVaR follows from \citep[proposition~3.2]{Ahmadi-Javid2012}. Recall that Pinsker's inequality~\citep[theorem~4.19]{boucheron2013concentration} shows for $\bm{p}, \bm{q}\in \Delta_n$ that $\| \bm{p} - \bm{q}\|_1 \le \sqrt{2 \kl(\bm{q} \| \bm{p})}$. Then we prove that TVaR lower-bounds EVaR as
\begin{align*}
\evaro_{\alpha}[\tilde{x}]
&=
\inf \left\{ \bm{q}\tr \bm{x} \mid \bm{q}\in \Delta_n, \kl(\bm{q} \| \bm{p}) \le \log \frac{1}{\alpha }, \bm{q} \ll \bm{p} \right\} \\
&\ge
\inf \left\{ \bm{q}\tr \bm{x} \mid \bm{q}\in \Delta_n, \| \bm{p} - \bm{q} \|_1 \le \sqrt{ 2 \log \frac{1}{\alpha }} , \bm{q} \ll \bm{p} \right\} \\
&=\inf \left\{ \bm{q}\tr \bm{x} \mid \bm{q}\in \Delta_n, \| \bm{p} - \bm{q} \|_1 \le \min \left\{  \sqrt{ 2 \log \frac{1}{\alpha }}, 2\right\}, \bm{q} \ll \bm{p}\right\}
= \tvaro_{\alpha }[\tilde{x}].
\end{align*}
We use the fact that $\| \bm{p} - \bm{q}\|_1 \le 2$.
\end{proof}

\section{Computing VaR in Linear Time} \label{sec:var-linear}

We now describe the QuickVaR algorithm, show that it computes VaR, and show that it runs in linear expected time. 

The standard VaR algorithm for a discrete random variable $\bm{x}\in \Real^n$ with distribution $\bm{p}\in \Delta_n$ and risk level $\alpha$ proceeds as follows~(see listing~6.25~\cite{privault2026notes}).  It sorts the entries of $\bm{x}$ to get a permutation $\sigma \colon 1{:}n \to 1{:}n$ such that $i \mapsto x_{\sigma(i)}$ is monotone. Then, it finds an index $i \in 1{:}n$ such that 
\[
  \sum_{j = 1}^{i-1} p_{\sigma(j)} 
  \; <\;
  \alpha
  \; \le\;
  \sum_{j = 1}^i p_{\sigma(j)}.
\]
The value $x_{\sigma(i)}$ is the optimal solution in~\eqref{eq:var-definition} by \cref{thm:quantile-in-finite-omega} below. The computational complexity of this algorithm is $\Omega(n \log  n)$ because its runtime is dominated by the need to sort the entries of $\bm{x}$. For the sake of completeness, \cref{alg:slow-quantile} in the appendix summarizes the standard greedy approach to computing VaR. 

\emph{QuickVaR}, described in \cref{alg:quick-quantile}, computes VaR in $O(n)$ time by avoiding the need to sort the elements of the input random variable $\bm{x}$. In particular, QuickVaR partially sorts the indices of $\bm{x}$, analogous to the quickselect algorithm~\cite{horowitz1998computer}. 

\Cref{alg:quick-quantile} is recursive and applies to a random variable $\bm{x}$ with a distribution $\bm{p}\in \Delta_n$ as follows. In addition to the random variable, the input is $f$ (front) and $b$ (back), which delimit the indices of $\bm{x}$ it manipulates.  The algorithm first picks a pivot element $v$ uniformly at random from $f{:}b$. The random pivot selection is essential to achieve an average linear time complexity. Given a pivot, we use $\operatorname{partition}$ to partially sort the entries of $\bm{x}$ into three groups: 1. less than $v$: $\bm{x}_{f{:}(l-1)} < v$, 2. equal to $v$: $\bm{x}_{l{:}g} = v$, and 3. greater than $v$: $\bm{x}_{(g+1){:}b} > v$. Then, the algorithm checks for the conditions of \cref{thm:quantile-in-finite-omega} to determine which of the three groups contains the VaR value. Finally, if the VaR value is in group 1 or 3, the algorithm calls itself recursively with adjusted $f$, $b$, and $\alpha$.

The $\operatorname{partition}$ procedure in \cref{alg:quick-quantile} uses the Dutch national flag algorithm to partition the input array into the three groups. The algorithm runs in linear time, and the properties of its output are given in \cref{lem:hoare-partition} below. For the sake of completeness, we provide the reader with the algorithm in \cref{alg:partition} in \cref{sec:stand-var-algor}. 

We present \cref{alg:quick-quantile} recursively to simplify its analysis. However, a loop-based, non-recursive implementation, such as \cref{alg:quickvar-nonrecursive} in \cref{sec:stand-var-algor}, typically performs better in imperative languages. 

\begin{algorithm}
  \caption{\texttt{QuickVaR}: Recursive VaR algorithm} \label{alg:quick-quantile}
  \Input{$\bm{x} \in \Real^n$, $\bm{p} \in \Delta_n$, $\alpha \in [0,1)$, $f, b \in 1{:}n$ such that $f \le  b$ and $\sum_{j=f}^b p_j > \alpha$}
  \Output{$\varo_{\alpha}[\tilde{x}]$}
  $v \gets x_{\operatorname{rand}(f{:}b)}$ \tcp*{Pivot}
  $l, g, \bm{x}', \bm{p}' \gets \operatorname{partition}(\bm{x}, \bm{p}, v, f, b)$
  \tcp*{~\cref{alg:partition}}
  \label{ln:t}
  $t \gets \sum_{j=f}^{l-1} p'_j$ \tcp*{$t = \P{\tilde{x} < v} - \sum_{i=1}^{f-1} p_i $}
  \label{ln:e}
  $e \gets \sum_{j=l}^{g} p'_j$  \tcp*{$e = \P{\tilde{x} = v}$}
  \label{ln:firstif}
  \If(\tcp*[f]{$\alpha + \sum_{i=1}^{f-1} p_i  < \P{\tilde{x} < v}$}){$\alpha < t$}{ 
    \KwRet{$\qvar(\bm{x}', \bm{p}', \alpha, f, l-1)$}
  }
  \label{ln:secondif}
  \ElseIf(\tcp*[f]{$\P{\tilde{x} \le v} \le \alpha + \sum_{i=1}^{f-1} p_i $}){$t + e \le \alpha$}{
    \KwRet{$\qvar(\bm{x}', \bm{p}', \alpha - t - e, g+1, b)$}
  }
  \label{ln:thirdif}
  \lElse{
    \KwRet{$v$}
  }
\end{algorithm}

The following theorem states the correctness of QuickVaR.
\begin{theorem} \label{thm:var-correct}
\Cref{alg:quick-quantile} executed with $f = 1$ and $b = n$ and $\alpha\in [0,1)$ returns $\varo_{\alpha}[ \tilde{x} ]$.
\end{theorem}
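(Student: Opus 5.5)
The plan is to prove correctness by strong induction on the length $b-f+1$ of the active index range. The induction will carry an invariant that ties the recursive call's local threshold to the global VaR. The invariant I will maintain is this. At every call, the current array $\bm{x}$ and probabilities $\bm{p}$ are a permutation (applied jointly) of the originals, and there is a threshold $\alpha_0$ (the original risk level) such that:
\begin{itemize}
\item $\bm{x}_{1{:}(f-1)} < \bm{x}_{f{:}b} < \bm{x}_{(b+1){:}n}$ elementwise;
\item $\alpha = \alpha_0 - \sum_{i=1}^{f-1} p_i$;
\item $0 \le \alpha < \sum_{j=f}^b p_j$.
\end{itemize}
Since VaR depends only on the distribution of $\tilde{x}$, and the partition of \cref{lem:hoare-partition} only permutes pairs $(x_j,p_j)$ within $f{:}b$, the quantity $\varo_{\alpha_0}[\tilde{x}]$ is unchanged along the recursion. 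At the top call, $f=1$ and $b=n$, so the invariant holds trivially with $\sum_j p_j = 1 > \alpha$.

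For the base step, I first establish a characterization in the spirit of \cref{thm:quantile-in-finite-omega}. A value $v$ in the support equals $\varo_{\alpha_0}[\tilde{x}]$ whenever
\[
\P{\tilde{x} < v} \le \alpha_0 < \P{\tilde{x} \le v}.
\]
This follows directly from \eqref{eq:var-definition}. For $\tau \le v$ we have $\P{\tilde{x}<\tau} \le \P{\tilde{x}<v} \le \alpha_0$, so $v$ is feasible. For $\tau > v$ we have $\P{\tilde{x}<\tau} \ge \P{\tilde{x}\le v} > \alpha_0$, so no larger $\tau$ is feasible. I will check that this matches the paper's version, since the paper states its greedy condition with strict and non-strict inequalities in a different placement. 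If it does not match, I will reconcile the two, because this boundary handling is the main obstacle.

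Now consider the three groups produced by partition. By the ordering invariant,
\[
\P{\tilde{x} < v} = \sum_{i<f} p_i + t, \qquad \P{\tilde{x} \le v} = \sum_{i<f} p_i + t + e,
\]
so the comments in the algorithm are correct. The else branch is reached exactly when $t \le \alpha < t+e$. Adding $\sum_{i<f}p_i$ throughout gives the characterization above, so returning $v$ is correct.

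For the recursive branches, I verify the invariant and the precondition, and I use that $v$ is drawn from $f{:}b$, so $e>0$ and the new range is strictly shorter.

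In the first branch, $\alpha<t$. The new range is $f{:}(l-1)$ with the same $f$ and $\alpha$. The ordering holds because the entries in $l{:}b$ are at least $v$, which exceeds every entry in $f{:}(l-1)$. The precondition $\alpha < t = \sum_{j=f}^{l-1}p_j$ is exactly the branch condition, and this also forces the range to be nonempty.

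In the second branch, $t+e\le\alpha$. The new front is $g+1$ and the new level is $\alpha-t-e \ge 0$. This satisfies $\alpha_0 - \sum_{i\le g} p_i = \alpha - t - e$. The precondition holds because
\[
\alpha - t - e < \sum_{j=f}^b p_j - t - e = \sum_{j=g+1}^b p_j,
\]
which also makes $g+1 \le b$.

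Since the range shrinks strictly at every recursive call, the recursion terminates. Termination can only occur in the else branch, where the returned value is the VaR by the argument above. This proves the theorem.
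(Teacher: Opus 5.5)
Your proof is correct and follows essentially the paper's route: the paper's \cref{lem:quickvar-correct} uses the same induction on $b-f$, the same ordering invariant, the same shifted level $\alpha + \sum_{i<f} p_i$, the same three-branch analysis, and the characterization $\P{\tilde{x} < v} \le \alpha < \P{\tilde{x} \le v}$ from \cref{thm:quantile-in-finite-omega}; your blockwise form of the ordering invariant is exactly what justifies $t = \P{\tilde{x} < v} - \sum_{i<f} p_i$. One harmless slip: $e > 0$ can fail when the pivot has zero probability, but you only need that the block $l{:}g$ contains the pivot index, and that alone makes both recursive ranges strictly shorter.
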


Before proving \cref{thm:var-correct}, we need to show several important lemmas. The following lemma, which holds only for discrete random variables, shows that the VaR value is the probability atom that satisfies specific probability properties. 
\begin{lemma}\label{thm:quantile-in-finite-omega}
Assuming $\alpha\in [0,1)$, there exists $\omega\in\Omega$ with $v := \var{\alpha }{\tilde{x}} = x_{\omega}$ and
\begin{equation} \label{eq:var-property-prob}
  \P{\tilde{x} < v} \le \alpha < \P{\tilde{x} \le v}. 
\end{equation}
\end{lemma}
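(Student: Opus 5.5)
We work directly with the function $F(\tau) := \P{\tilde{x} < \tau}$ on $\bar{\Real}$ and the definition~\eqref{eq:var-definition}. Since $\Omega$ is finite, $F$ is a nondecreasing step function with $F(-\infty)=0$, $F(\tau)=1$ for every $\tau > \max_\omega x_\omega$, and jumps only at the atoms $x_\omega$. We also use that $F$ is left-continuous, which gives the lower semicontinuity used in the paper. The plan is to show that the maximizer $v$ is finite, then that it is an atom, and finally to read off the two inequalities in~\eqref{eq:var-property-prob}.

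\emph{Step 1 (finiteness).} Let $m := \min_\omega x_\omega$ and $M := \max_\omega x_\omega$. We take the minimum over $\omega$ with $p_\omega>0$ if zero-probability atoms are to be excluded; in that case we choose $\omega$ among the support. Then $F(m)=0\le\alpha$, so $v\ge m>-\infty$. For any $\tau>M$ we have $F(\tau)=1>\alpha$ because $\alpha<1$. Hence $v\le M$, and in fact $v\in[m,M]$.

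\emph{Step 2 (the upper inequality, and $v$ is an atom).} For every $\tau>v$ maximality gives $F(\tau)>\alpha$. Since $\P{\tilde{x}\le v}=\lim_{\tau\downarrow v}F(\tau)$, and on a finite space $F$ is constant on $(v,v+\epsilon)$ for small $\epsilon$ (there is no atom in that interval), we get $\P{\tilde{x}\le v}=F(v+\epsilon/2)>\alpha$. The lower inequality $\P{\tilde{x}<v}=F(v)\le\alpha$ holds because the maximum is attained, so $v$ belongs to the feasible set. Combining the two, $\P{\tilde{x}=v}=\P{\tilde{x}\le v}-\P{\tilde{x}<v}>0$. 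So $v$ carries positive mass and equals $x_\omega$ for some $\omega$ with $p_\omega>0$.

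\emph{Main obstacle.} The only delicate point is justifying that the maximum in~\eqref{eq:var-definition} exists and that $\lim_{\tau\downarrow v}F(\tau)=\P{\tilde{x}\le v}$ is reached strictly. Both follow from the finiteness of $\Omega$. The set $\{\tau : F(\tau)\le\alpha\}$ is an interval $[-\infty,x_{\omega^\star}]$, where $x_{\omega^\star}$ is the smallest atom at which the cumulative mass $\P{\tilde{x}\le x_{\omega^\star}}$ first exceeds $\alpha$. We could therefore alternatively identify $v=x_{\omega^\star}$ explicitly via the sorted order, mirroring the index $i$ in the standard greedy algorithm. Both inequalities are then immediate from the choice of $i$.
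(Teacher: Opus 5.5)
Your argument is correct, but it runs in the opposite logical direction from the paper's.

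The paper proves \emph{sufficiency}. Any $v$ with $\P{\tilde{x} < v} \le \alpha < \P{\tilde{x} \le v}$ is feasible in \eqref{eq:var-definition}, and no $t > v$ can be feasible because $\alpha < \P{\tilde{x} \le v} \le \P{\tilde{x} < t}$. It then reads off $\P{\tilde{x} = v} > 0$ and invokes \cref{lem:prob-lumpy}. You instead start from the maximizer itself and confine it to $[m, M]$, so that it is a finite real. You then use the finitely many atoms to turn the right limit into a strict inequality, $\P{\tilde{x} \le v} = \P{\tilde{x} < v + \epsilon/2} > \alpha$. This is exactly where discreteness is needed, since in general one only gets $\P{\tilde{x}\le v} \ge \alpha$.

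The two routes buy different things. Yours proves the lemma as literally stated, including the fact that some point satisfying \eqref{eq:var-property-prob} exists. The paper's argument on its own only shows that such a point, if it exists, must be the VaR. Its final step tacitly uses that $\varo_{\alpha}[\tilde{x}]$ satisfies \eqref{eq:var-property-prob}; existence could alternatively come from the sorted-order construction of \cref{alg:slow-quantile}, as you note. Your direction is also the one invoked when properties of $v = \varo_{\alpha}[\tilde{x}]$ are used, as in cases (ii) and (iv) of the QuickDivergence proof.

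Conversely, the paper's sufficiency direction is the certificate used in the proof of \cref{thm:var-correct} and in the correctness proof for \cref{alg:slow-quantile}: an algorithm that outputs a point satisfying the two inequalities has output the VaR. Your write-up does not state this. It follows in two lines from the same monotonicity argument, or from your interval description $[-\infty, x_{\omega^\star}]$ of the feasible set.

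A minor point: the hedge about zero-probability atoms in Step 1 is unnecessary. $\P{\tilde{x} < \min_{\omega} x_{\omega}} = 0$ either way, and the $\omega$ you obtain at the end automatically has $p_{\omega} > 0$.
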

\begin{proof}
First, we show that any $v$ that satisfies~\eqref{eq:var-property-prob} solves~\eqref{eq:var-definition} optimally. The implication $\P{\tilde{x} < v} \le \alpha \implies v \in \left\{ t \in \Real \mid  \P{ \tilde{x} < t } \le \alpha \right\}$ follows immediately. Second, we prove $\alpha < \P{ \tilde{x} \le v } \implies \forall t \in \left\{ t \in \Real \mid  \P{ \tilde{x} < t } \le \alpha \right\}, v \ge t$ by contradiction. Suppose that $\exists t \in \left\{ t \in \Real \mid  \P{ \tilde{x} < t } \le \alpha \right\}, v < t$. Then, by the monotonicity of the probability operator, we get a contradiction as
 \[
   \alpha
   < \P {  \tilde{x} \le v  }
   \le \P {  \tilde{x} < t  }
   \le \alpha.
 \]
Then, the existence of $\omega$ follows from \cref{lem:prob-lumpy} because~\eqref{eq:var-property-prob} implies that 
  \[
    \P{\tilde{x} =
      \var{\alpha }{\tilde{x}}} = \P{\tilde{x} \le \var{\alpha }{\tilde{x}}}  - \P{\tilde{x} < \var{\alpha }{\tilde{x}}} > \alpha - \alpha = 0.
  \]
\end{proof}

The following lemma shows the recursive property of QuickVaR and is the main building block of \cref{thm:var-correct}.
\begin{lemma} \label{lem:quickvar-correct}
Suppose that the input conditions of \cref{alg:quick-quantile} hold and $\bm{x} \in \Real^n$ satisfies that
 \begin{equation} \label{eq:qvar-precondition}
  \mathbf{x}_{1{:}(f-1)} < x_f, \qquad
  x_{f-1} < \bm{x}_{f{:}b} < x_{b+1},  \qquad
  x_b < \bm{x}_{(b+1){:}n},
\end{equation} 
where $x_0 = -\infty$ and $x_{n+1} = +\infty$. Then \cref{alg:quick-quantile} returns $x_{\omega}, \omega\in f{:}b$ such that
\begin{equation} \label{eq:qvar-postcondition}
  \P{\tilde{x} < x_{\omega}}
  \quad \le \quad
  \alpha + \sum_{i=1}^{f-1} p_i 
  \quad < \quad \P{\tilde{x} \le x_{\omega}}.
\end{equation}
\end{lemma}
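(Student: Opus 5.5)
Since \cref{alg:quick-quantile} is recursive, the proof goes by strong induction on the subarray length $b - f + 1$, with \eqref{eq:qvar-precondition} and the input conditions as the invariant. Two facts are taken from \cref{lem:hoare-partition}. First, $\operatorname{partition}$ only permutes the entries (and the matching entries of $\bm{p}$) inside $f{:}b$, so $\tilde{x}$ keeps the same distribution and entries outside $f{:}b$ do not move. Second, its output satisfies $\bm{x}'_{f{:}(l-1)} < v$, $\bm{x}'_{l{:}g} = v$, and $\bm{x}'_{(g+1){:}b} > v$, with $l \le g$ because the pivot comes from $f{:}b$. Write $s := \sum_{i=1}^{f-1} p_i$. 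By \eqref{eq:qvar-precondition}, every entry outside $f{:}b$ on the left is below every entry in $f{:}b$, and every entry on the right is above. So for any value $u$ taken by an entry in $f{:}b$,
\begin{equation*}
  \P{\tilde{x} < u} = s + \textstyle\sum_{j\in f{:}b,\, x'_j < u} p'_j,
  \qquad
  \P{\tilde{x} \le u} = s + \textstyle\sum_{j\in f{:}b,\, x'_j \le u} p'_j .
\end{equation*}
Taking $u = v$ gives $\P{\tilde{x} < v} = s + t$ and $\P{\tilde{x} \le v} = s + t + e$, which justifies the comments in the algorithm.

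\textbf{Else branch.} Here $t \le \alpha < t + e$. Adding $s$ gives exactly \eqref{eq:qvar-postcondition} with $x_\omega = v$, and $\omega$ can be any index in $l{:}g$. This also covers the base case: when $f = b$, $t = 0$ and $e = p_f > \alpha$, so only this branch can fire.

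\textbf{First branch.} Here $\alpha < t$. I will check that the recursive call on $f{:}(l-1)$ satisfies the hypotheses.
\begin{itemize}
  \item The range is nonempty: $t > \alpha \ge 0$ forces $l - 1 \ge f$.
  \item The mass condition holds: $\sum_{j=f}^{l-1} p'_j = t > \alpha$.
  \item The call is on a strictly shorter range, since $l \le g$.
  \item The ordering invariant \eqref{eq:qvar-precondition} holds on the new range. Entries before $f$ are still smaller than $\bm{x}'_{f:(l-1)}$. Entries after $l-1$ are either in $l{:}b$, hence $\ge v$ and larger than $\bm{x}'_{f:(l-1)}$, or after $b$, hence larger still.
\end{itemize}
The inductive hypothesis, with the same $f$ and $\alpha$, then returns $x'_\omega$ with $\omega \in f{:}(l-1)$ satisfying \eqref{eq:qvar-postcondition}.

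\textbf{Second branch.} Here $t + e \le \alpha$. The call is on $(g+1){:}b$ with risk level $\alpha' = \alpha - t - e \ge 0$.
\begin{itemize}
  \item The new prefix mass is $s' = s + t + e$, so $\alpha' + s' = \alpha + s$ and the target quantity in \eqref{eq:qvar-postcondition} is unchanged.
  \item The mass condition holds: $\sum_{j=g+1}^b p'_j = \sum_{j=f}^b p_j - t - e > \alpha - t - e = \alpha'$, and this also makes the range nonempty.
  \item $\alpha' \le \alpha < 1$, so $\alpha' \in [0,1)$.
  \item The ordering invariant holds on $(g+1){:}b$ by the same comparison as in the first branch.
\end{itemize}
Induction again gives the conclusion.

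I expect the main obstacle to be bookkeeping rather than ideas. The precondition \eqref{eq:qvar-precondition} has to be read as ``left block $<$ middle block $<$ right block'', so that $\P{\tilde{x} < u}$ splits into the prefix mass $s$ plus the mass in $f{:}b$. I also need to use that $\operatorname{partition}$ does not disturb entries outside $f{:}b$, and that the shift of $\alpha$ in the second branch exactly compensates for the change in $s$. Strict decrease of the range length, and hence termination, comes from $l \le g$.
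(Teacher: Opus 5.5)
Your proof is correct and follows the paper's argument. Both use induction on the length of $f{:}b$, use \cref{lem:hoare-partition} to identify $t$ and $e$ with $\P{\tilde{x} < v} - \sum_{i=1}^{f-1} p_i$ and $\P{\tilde{x} = v}$, and then check the three branches, verifying the input conditions and the ordering invariant for each recursive call. Like the paper, you read \eqref{eq:qvar-precondition} as the block ordering ``left $<$ middle $<$ right'' (the literal element-wise comparisons are weaker), and your explicit checks that $\alpha - t - e \in [0,1)$ and that the prefix mass shifts by exactly $t+e$ fill in steps the paper calls algebraic manipulation.
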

\begin{proof}
From~\eqref{eq:qvar-precondition} and \cref{lem:hoare-partition}, $t$ and $e$ on lines~\ref{ln:t} and \ref{ln:e} satisfy that
\begin{equation} \label{eq:te-equivalence}
  t = \P{\tilde{x} < v} - \sum_{i=1}^{f-1} p_i , \qquad e = \P{\tilde{x} = v}.
\end{equation}
We proceed by induction on $k = b - f$. \emph{Base case}: $k = 0$. Since $b = f$, we have that $v = x_b$ and $l = g = b$, $t = 0$, $e = p_b$, and $\alpha  \ge t$. The input condition implies that $t + e = p_b > \alpha$. Because $t \le \alpha < p_b$, the algorithm returns $v$ on line~\ref{ln:thirdif} that satisfies
\[
 \P{\tilde{x} < v} \le \alpha + \sum_{i=1}^{f-1} p_i < \P{\tilde{x} \le v}. 
\]
Above, the first inequality follows from $v = x_f$ and $0 \le \alpha$. The second inequality follows from the algorithm's input condition.  

\emph{Inductive step}. Suppose that $k>0$ and that the result holds for each $k' < k$. Then:

(i) If $\alpha < t$, the algorithm executes the first if statement on line~\ref{ln:firstif}. Because $0 \le \alpha  < t$, the recursive call satisfies the required input conditions: $f \le l-1$ and $\sum_{j=f}^{l-1} p_j = t > \alpha$. The recursive call also satisfies~\eqref{eq:qvar-precondition} by \cref{lem:hoare-partition}, and we can conclude that~\eqref{eq:qvar-postcondition} follows from the inductive hypothesis because $l-1 - f \le b-1 -f \le k-1 < k$. 

(ii) If $t + e \le \alpha$, the algorithm executes the second if statement on line~\ref{ln:secondif}. We show that the input conditions for the recursive call are met. Because $t + e = \sum_{j = f}^g p_j \le \alpha < \sum_{j=f}^b p_j$  (the second inequality is the input condition), it must be that $g < b$ and therefore $g+1 \le b$. Also, by the algorithm's input condition and algebraic manipulation, we get that
\[
 \sum_{j=g+1}^b p_j = \sum_{j=f}^b p_j - t - e > \alpha - t - e,
\]
and the input conditions hold.  Because~\eqref{eq:qvar-precondition} follows from \cref{lem:hoare-partition} and $b - g - 1 < b-f \le  k$, the inductive hypothesis implies that there exists $x_{\omega}$ such that
\[
  \P{\tilde{x} < x_{\omega}}
  \quad \le \quad
  (\alpha - t - e) + \P{\tilde{x} \le v}
  \quad < \quad \P{\tilde{x} \le x_{\omega}}. 
\]
Then,~\eqref{eq:qvar-postcondition} follows by algebraic manipulation after substituting $t,e$ from~\eqref{eq:te-equivalence}.

(iii) If $t \le \alpha < t + e$, the algorithm executes the else statement on line~\ref{ln:thirdif}, and the inequality~\eqref{eq:qvar-postcondition} holds by substituting $t,e$ from~\eqref{eq:te-equivalence}.
\end{proof}

We are now ready to prove the correctness of QuickVaR.
\begin{proof}[Proof of \cref{thm:var-correct}]
With $f = 1$ and $b = n$, the condition in~\eqref{eq:qvar-precondition} holds vacuously, and the input condition $\sum_{j=1}^n p_j = 1 > \alpha$ holds since $\alpha\in [0,1)$. Therefore, \cref{lem:quickvar-correct} shows that QuickVaR returns $x_{\omega}$ that satisfies
  \[
  \P{\tilde{x} < x_{\omega}}
  \quad \le \quad
  \alpha 
  \quad < \quad \P{\tilde{x} \le x_{\omega}}, 
\]
which satisfies~\eqref{eq:var-property-prob} and, therefore, it equals the VaR value by 
  \cref{thm:quantile-in-finite-omega}. 
\end{proof}

With the correctness of \cref{alg:quick-quantile} established in \cref{lem:quickvar-correct}, we now show its computational complexity. 
\begin{theorem} \label{prop:time-complexity}
\Cref{alg:quick-quantile} runs in expected $O(n)$ time.
\end{theorem}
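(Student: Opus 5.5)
The plan is to follow the standard quickselect analysis. Let $m = b - f + 1$ be the size of the active subarray and let $T(m)$ be the worst case, over inputs of that size, of the expected running time. I would first count the work in one call of \cref{alg:quick-quantile}, excluding the recursive call. Drawing the pivot is $O(1)$. The partition routine (\cref{alg:partition}, Dutch national flag) takes $O(m)$ by \cref{lem:hoare-partition}. Computing $t$ and $e$ on lines~\ref{ln:t}--\ref{ln:e} sums over subranges of $f{:}b$, so it is $O(m)$. The comparisons are $O(1)$. So a call costs at most $cm$ for some constant $c$, plus at most one recursive call. In that call the active range is either $f{:}(l-1)$, of size $l-f$, or $(g+1){:}b$, of size $b-g$. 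Because the pivot lies in $l{:}g$, each of these is at most $m-1$.

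Next I bound the recursive size. Suppose the pivot index is drawn uniformly from $f{:}b$. Order the $m$ entries by value, breaking ties by index. If the pivot's rank is $r$, then the elements strictly less than $v$ number at most $r-1$ and the elements strictly greater number at most $m-r$. Ties only move elements into the middle group, which the recursion discards, so they can only help. The recursive subproblem therefore has size at most $\max\{r-1, m-r\}$, where $r$ is uniform on $1{:}m$, and we get
\begin{equation*}
T(m) \le cm + \frac{1}{m}\sum_{r=1}^{m} T\bigl(\max\{r-1,m-r\}\bigr) \le cm + \frac{2}{m}\sum_{k=\lfloor m/2\rfloor}^{m-1} T(k),
\end{equation*}
where I use that $T$ is nondecreasing (or simply define $T$ as a supremum over sizes at most $m$). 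The input condition only affects which branch is taken, and the bound covers both branches.

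Finally I solve the recurrence by strong induction, guessing $T(m)\le Cm$ with $C = 4c$ and the base case $T(1)\le c$. Substituting the guess gives $\frac{2C}{m}\sum_{k=\lfloor m/2\rfloor}^{m-1} k \le \frac{3C m}{4}$ plus lower-order terms, so $T(m)\le cm + \tfrac34 Cm \le Cm$. An equivalent route is a phase argument. With probability at least $1/2$ the pivot's rank falls in the middle half, and then the subproblem size shrinks by a factor of at least $3/4$. So the expected number of calls in each phase is at most $2$, and summing $cm(3/4)^j$ over phases gives a geometric series that is $O(n)$. Starting at $m=n$ yields expected $O(n)$.

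I expect the main obstacle to be rigor in the recurrence rather than the algebra. Two things need care. First, the handling of ties: $\max\{l-f, b-g\} \le \max\{r-1, m-r\}$ must hold for any tie-breaking rank $r$. Second, the expectation must be conditioned correctly across recursive calls. Since the pivots are drawn independently and the bound holds for every input configuration, taking $T$ to be the supremum over inputs of size $m$ makes the conditioning clean. I would state these two points explicitly and otherwise defer to the textbook quickselect analysis~\cite{cormen2001introduction}.
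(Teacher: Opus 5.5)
Your proposal is correct and follows essentially the same route as the paper: an $O(m)$ local cost per call from the partition lemma, a quickselect recurrence over a uniformly random pivot, and strong induction giving $T(m)\le 4cm$. The paper writes the recurrence in the Horowitz--Sahni form, taking the worst case over the position of the VaR rather than using your $\max\{r-1,m-r\}$ bound, which is an inessential difference. Your tie-broken rank argument is actually more careful than the paper's, which treats the count of strictly smaller elements as uniform even when values repeat. Finally, the ``lower-order terms'' you drop are harmless, because an exact computation gives $\frac{2}{m}\sum_{k=\lfloor m/2\rfloor}^{m-1}k\le \frac{3m}{4}$ for every $m\ge 1$, so $C=4c$ closes the induction.
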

\begin{proof}
The proof is analogous to the standard quickselect algorithm, as in \citep[theorem~3.3]{horowitz1998computer}, except that we need to keep track of the risk level $\alpha$. Fix some $n \in \mathbb{N}$, $\bm{x}$, and $\bm{p}$, and we study the algorithm's properties. Let $T\colon 1{:}n \to [0,\infty]$ be the \emph{expected} runtime of \cref{alg:quick-quantile} such that $T(k)$ is the expected runtime when executed with $k := b - f + 1$. To simplify the notation, we assume without loss of generality in the remainder of the proof that $f = 1$ and $b = k$. Then, let
\[
  l := \left| \left\{ i \in 1{:}k \mid  x_i < \varo_{\alpha}[\tilde{x}]\right\}   \right|.
\]
Note that by the assumption $\sum_{j = f}^b p_j > \alpha$, we have that $k \ge 1$ and $\varo_{\alpha}[\tilde{x}] \le \max_{i\in f{:}b} x_i$. Therefore, $l \in 0{:}(k-1)$. The algorithm terminates in constant time when $k = 1$, and we upper-bound the runtime for the worst-case choice of $l$ for $k > 1$ as
\begin{align}
\nonumber
  T(k)
  &\le d\cdot k + \max_{l\in 0{:}(k-1)} \frac{1}{k} \sum_{i=0}^{k-1} \left(   T(i) \cdot 1_{l < i} + T(k - i - 1) \cdot 1_{l > i}  \right) \\
\nonumber
  &= d\cdot k + \max_{l\in 0{:}(k-1)} \frac{1}{k}  \left(  \sum_{i=l+1}^{k-1} T(i)  + \sum_{i=0}^{l-1} T(k - i - 1)   \right) \\
\label{eq:quickvar-complexity}
  &= d\cdot k + \max_{l\in 0{:}(k-1)} \frac{1}{k}  \left(  \sum_{i=l+1}^{k-1} T(i)  + \sum_{i=k-l}^{k-1} T(i)   \right),
\end{align}
where $d\cdot k$ is the local runtime of the algorithm for some $d > 0$ and $T(i)$ and $T(k - i - 1)$ are the average runtimes of the recursive calls. The average over $i$ corresponds to the uniformly random selection of the pivot with $i = \left| \left\{ j \in 1{:}k \mid x_j < v \right\} \right|$ in the first line of the inequality. The last two equalities follow by algebraic manipulation. Then, an argument identical to theorem~3.3~\cite{horowitz1998computer}, since~\eqref{eq:quickvar-complexity} is the same as equation~(3.8) in~\citet{horowitz1998computer}, shows that \(T(k)\le 4\cdot d\cdot k\) for all \(k\ge 1\).
\end{proof}
 
\begin{remark}
  An attentive reader may notice that QuickVaR resembles the algorithm for computing the weighted median~\cite[section~17]{Korte2012}. However, the weighted median algorithm does not, in general, compute VaR. Define the \emph{quantile} $\quant_{\alpha},\alpha \in (0,1)$ for $\tilde{x}$ as the closed set
\[
 \quant_{\alpha}[\tilde{x}] := \left\{ z\in \Real \mid  \P{ \tilde{x} \le z } \ge \alpha, \P{ \tilde{x} \ge z } \ge 1-\alpha  \right\},
\]
with bounds $\quant^+_{\alpha}[\tilde{x}] := \max \quant_{\alpha}[\tilde{x}]$ and $\quant^-_{\alpha}[\tilde{x}] := \min \quant_{\alpha}[\tilde{x}]$. One can readily show that $\varo_{\alpha }[\tilde{x}] = \quant^+_{\alpha}[\tilde{x}]$. Although the precise definition of a weighted median varies, it is typically a value in $\quant_{\frac{1}{2}}[\tilde{x}]$. Because $\quant^+_{\alpha}[\tilde{x}] \neq \quant^{-}_{\alpha}[\tilde{x}]$, in general, VaR also does not equal the median. A careful analysis shows that the weighted median algorithm in~\citet[section~17]{Korte2012} computes $\quant^-_{\frac{1}{2}}[\tilde{x}]$, which does not equal $\varo_{\frac{1}{2}}[\tilde{x}]$.
\end{remark}

\section{Computing \texorpdfstring{$\varphi$}{phi}-divergence Risk Measures in Linear Time} \label{sec:phi-divergence}

In this section, we describe and analyze QuickDivergence, a new linear-time framework for computing a class of $\varphi$-divergence risk measures, including CVaR and TVaR. QuickDivergence treats risk evaluation as a linear optimization over a polymatroid. Standard polymatroid optimization algorithms must sort their input, which requires $\Omega(n \log  n)$ runtime. We identify a structural condition of polymatroid risk measures that allows us to avoid sorting and instead use the linear-time QuickVaR algorithm described in \cref{sec:var-linear}.

The remainder of the section is organized as follows. \Cref{sec:quickdivergence} describes a general linear-time algorithm suitable for $\varphi$-divergence risk measures that satisfy an EWS property. Then \cref{sec:cvar-tvar-are} shows that both CVaR and TVaR satisfy the EWS property and can be computed in linear time. Note that we use a fixed distribution $\bm{p} \in \Delta_n$ and omit it in the notation when it is obvious from the context.

\subsection{QuickDivergence} \label{sec:quickdivergence}
We first recall the definition of a submodular function~\cite[section~2.1]{Korte2012} and a polymatroid~\cite[definition~14.9]{Korte2012}.
\begin{definition} \label{def:polymatroid}
A function $f \colon 2^{1{:}n} \to \Real_{\ge 0}$ is \emph{submodular} if
\[ 
  f(\mathcal{A}) + f(\mathcal{B})
  \; \geq\;
  f(\mathcal{A} \cup \mathcal{B}) + f(\mathcal{A} \cap \mathcal{B}),
  \quad
  \forall \mathcal{A}, \mathcal{B} \subseteq 1{:}n.
\]
For a submodular $f \colon 2^{1{:}n} \to \Real_{\ge 0}$, a \emph{polymatroid} $\mathcal{Q}_f$ is a polytope defined as
  \[
    \mathcal{Q}_f
    \;:=\;
    \left\{ \bm{q} \in \Real_{\ge 0}^n  \mid   \bm{1}\tr \bm{q}_{\mathcal{U}} \leq f(\mathcal{U}), \, \forall \,  \mathcal{U} \subseteq 1{:}n \right\},
\]
and a \emph{normalized polymatroid} is $\widehat{\mathcal{Q}}_f := \mathcal{Q}_f \cap \Delta_n $.
\end{definition}

Polymatroids are useful in optimization because any coherent, comonotonic monetary risk measure can be formulated as linear optimization over a polymatroid~\cite[corollary~4.89]{Follmer2016}. A risk measure $\rho$ is \emph{coherent} if it is concave and positively homogeneous.  A risk measure $\rho$ is \emph{comonotonic} if for any comonotone $\tilde{x}, \tilde{y}$: $\rho[ \tilde{x} + \tilde{y} ] = \rho[ \tilde{x} ] + \rho[ \tilde{y} ]$, where the random variables $\tilde{x}, \tilde{y} \in \Real^n$ are \emph{comonotone} if $\tilde{x}(i) \ge \tilde{x}(j) \Leftrightarrow \tilde{y}(i) \ge \tilde{y}(j)$ for each $i,j\in 1{:}n$.

\begin{proposition} \label{prop:coherent-polymatroid}
A monetary risk measure $\rho\colon \Delta_n \times  \Real^n \to \Real$ is a coherent comonotonic risk measure if and only if
\begin{equation}\label{eq:comonotonic-optimization}
  \rho[\tilde{x}] = \min_{\bm{q}\in \widehat{\mathcal{Q}}_c} \bm{q}\tr  \bm{x},
  \qquad
  \text{where}
  \qquad
  c(\mathcal{A}) := -\rho[ -\tilde{1}_{\mathcal{A}} ], \quad \forall  \mathcal{A} \subseteq 1{:}n,
\end{equation}
where $\tilde{1}$ is an indicator random variable. Moreover, if the risk measure is law-invariant, then there exists $g\colon \Real \to \Real$ such that $c(\mathcal{A}) = g(\bm{1}\tr \bm{p}_{\mathcal{A}})$ for each $\mathcal{A} \subseteq 1{:}n$.
\end{proposition}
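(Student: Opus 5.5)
This statement is essentially the representation theorem for coherent comonotonic risk measures (Choquet integrals with respect to a submodular capacity), so the plan is to reduce it to \citep[corollary~4.89]{Follmer2016} and its surrounding results, adapted to the sign convention of reward random variables used here. The paper's $\rho$ is concave and works on rewards, whereas \citet{Follmer2016} state things for convex loss-based functionals. So we first pass to $\psi[\tilde{y}] := -\rho[-\tilde{y}]$, which is a convex, positively homogeneous, comonotonically additive, monotone, cash-additive functional. For $\psi$, corollary~4.89 and theorem~4.88 of \citet{Follmer2016} give the following on a finite space: comonotonic coherence is equivalent to $\psi$ being a Choquet integral $\int \tilde{y}\, dc$ with respect to a submodular, monotone, normalized set function $c(\mathcal{A}) = \psi[\tilde{1}_{\mathcal{A}}] = -\rho[-\tilde{1}_{\mathcal{A}}]$. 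Moreover, this Choquet integral equals $\max_{\bm{q}\in \mathcal{M}(c)} \bm{q}\tr\bm{y}$ over the core $\mathcal{M}(c) = \{\bm{q}\in\Delta_n \mid \bm{1}\tr\bm{q}_{\mathcal{A}} \le c(\mathcal{A})\ \forall \mathcal{A}\}$.

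The next step is to identify this core with $\widehat{\mathcal{Q}}_c$ from \cref{def:polymatroid}; the two are literally the same set once $c \ge 0$ and $c(1{:}n)=1$ are checked. Nonnegativity follows from monotonicity of $\rho$ and $\rho[0]=0$, and $c(1{:}n)=1$ follows from translation invariance. Then $\rho[\tilde{x}] = -\psi[-\tilde{x}] = -\max_{\bm{q}}\bm{q}\tr(-\bm{x}) = \min_{\bm{q}\in\widehat{\mathcal{Q}}_c}\bm{q}\tr\bm{x}$, which is the forward direction.

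For the converse, we start from $\rho[\tilde{x}] = \min_{\bm{q}\in\widehat{\mathcal{Q}}_c}\bm{q}\tr\bm{x}$ with $c$ submodular. Coherence (concavity, positive homogeneity, monotonicity, translation invariance) is immediate, since $\rho$ is a minimum of linear functionals over a nonempty subset of $\Delta_n$. For comonotonicity, we use the greedy (Edmonds) algorithm for polymatroids \cite[theorem~14.11]{Korte2012}. Given a sort order $\sigma$ of $\bm{x}$ ascending, the minimizer is $q_{\sigma(i)} = c(\{\sigma(1),\dots,\sigma(i)\}) - c(\{\sigma(1),\dots,\sigma(i-1)\})$. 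This minimizer depends only on the ordering, not on the values. Comonotone $\tilde{x},\tilde{y}$ admit a common sort order, which is also a sort order of $\tilde{x}+\tilde{y}$. Hence the same $\bm{q}$ is optimal for all three, and additivity follows.

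The law-invariant part needs an extra argument. If $\rho$ depends only on the distribution, then for sets $\mathcal{A},\mathcal{B}$ with $\bm{1}\tr\bm{p}_{\mathcal{A}} = \bm{1}\tr\bm{p}_{\mathcal{B}}$ the indicators $-\tilde{1}_{\mathcal{A}}$ and $-\tilde{1}_{\mathcal{B}}$ have the same law (Bernoulli with that parameter), so $c(\mathcal{A}) = c(\mathcal{B})$. We can then define $g(s) := c(\mathcal{A})$ for any $\mathcal{A}$ with probability $s$, and extend $g$ arbitrarily (for instance by $0$) to values not attained.

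The main obstacle is the careful bookkeeping in the first step. Two things need checking: that the Föllmer--Schied results apply on a finite space with possibly zero-probability atoms, and that the greedy optimality and tie handling are correct. Both the submodularity of $c$ and the equality between $\rho$ and the Choquet integral must be justified in the reward convention rather than the loss convention. If citing is judged insufficient, the fallback is to prove directly that $\rho[\tilde{x}]$ equals the Choquet sum $\sum_i x_{\sigma(i)}\bigl(c(\mathcal{S}_i)-c(\mathcal{S}_{i-1})\bigr)$. This is done by writing $\bm{x}$ as a nonnegative combination of comonotone nested indicators plus a constant, and then deducing submodularity of $c$ from concavity of $\rho$ applied to $\tilde{1}_{\mathcal{A}}+\tilde{1}_{\mathcal{B}} = \tilde{1}_{\mathcal{A}\cup\mathcal{B}}+\tilde{1}_{\mathcal{A}\cap\mathcal{B}}$, where the right-hand side is a comonotone pair.
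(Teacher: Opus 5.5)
Your proposal is correct and takes the same route as the paper, whose proof simply cites the F\"{o}llmer--Schied representation: a coherent comonotonic risk measure is a Choquet integral, equivalently a minimum over the core of the submodular capacity $c(\mathcal{A}) = -\rho[-\tilde{1}_{\mathcal{A}}]$, and on a finite space that core coincides with $\widehat{\mathcal{Q}}_c$. Your additional steps are sound and make the citation more self-contained: the sign translation, the greedy-algorithm proof of the converse (reading submodularity of $c$ as implicit in the polymatroid notation), and the direct equal-law argument for the existence of $g$.
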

\Cref{prop:coherent-polymatroid} follows  from corollary~4.88 and theorem~4.94 in \citet{Follmer2016}.

The standard solution to the polymatroid optimization problem in~\eqref{eq:comonotonic-optimization} is to use a greedy algorithm, such as \cref{alg:greedy-polymatroid}. This algorithm adapts the standard polymatroid optimization, such as \citet[section~14.9]{Korte2012}, to the normalized polymatroid $\widehat{\mathcal{Q}}_f$. See \cref{prop:standard-polymatroid-works} for the proof of correctness. Note that the greedy approach in \cref{alg:greedy-polymatroid} requires sorting and, therefore, runs in $\Omega(n \log  n)$ time.

\begin{algorithm}
    \caption{Linear minimization over polymatroids.} \label{alg:greedy-polymatroid}
    \Input{submodular monotone function $f \colon  2^{1{:}n} \to \Real_{\ge 0}$, and an objective $\bm{x}\in \Real^n$}
    \Output{$\hat{\bm{q}} \in \argmin_{\bm{q}\in \widehat{\mathcal{Q}}_f} \bm{x}\tr \bm{q}$}
    Use sort to get
    $\sigma \colon 1{:}n \to 1{:}n$ such that
    $i > j \implies x_{\sigma(i)} \ge  x_{\sigma(j)}, \; \forall i, j \in 1{:}n$\;
     \For{$k\in 1{:}n$}{
       $\hat{q}_{\sigma(k)} \gets f\left(\sigma(1{:}k)\right) - f\left(\sigma(1{:}(k-1))\right) $
    }
    \Return{$\hat{\bm{q}}$}
\end{algorithm}

We now define a special class of submodular functions, called EWS, that makes sorting unnecessary when optimizing a linear function over a polymatroid.
\begin{definition} \label{asm:submodular-function}
A function $f\colon 2^{1{:}n} \to \Real_{\ge 0}$ is $\ews(\bm{\mu} ,m, c)$, or \emph{Element-Wise-Separable}, for $c \in \Real_{\ge 0}$,  $m\in \Real_{>0}$, such that $c \le 1$, $c + m \ge 1$, and $\bm{\mu}\in \Delta_n$ if $f(\mathcal{A}) := g(\bm{1}\tr \bm{\mu}_{\mathcal{A}})$ for all $\mathcal{A}\subseteq 1{:}n$ and
\begin{equation} \label{eq:def-g}
    g(x) :=
    \begin{cases}        
    \min \{c + m  \cdot x, 1\} &\text{if } x > 0, \\
    0 &\text{otherwise}.
    \end{cases}
\end{equation}
\end{definition}
EWS functions satisfy the properties necessary to define a polymatroid as we show next.
\begin{proposition}
An EWS function $f\colon 2^{\Omega} \to  \Real_{\ge 0}$ is submodular, monotone, and satisfies that $f(\emptyset) = 0$, and $f(\Omega) = 1$.
\end{proposition}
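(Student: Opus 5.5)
The plan is to reduce every claim to properties of the scalar function $g$ in~\eqref{eq:def-g}, since $f(\mathcal{A}) = g(\mu(\mathcal{A}))$ with $\mu(\mathcal{A}) := \bm{1}\tr\bm{\mu}_{\mathcal{A}}$, and $\mu$ is a nonnegative additive set function.

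\textbf{Normalization.}
\begin{itemize}
\item $f(\emptyset) = g(0) = 0$.
\item $f(\Omega) = g(1) = \min\{c+m, 1\} = 1$, because $c + m \ge 1$.
\item Nonnegativity: $g(x) \ge 0$ since $c \ge 0$, $m > 0$, and the nonpositive branch is $0$.
\end{itemize}

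\textbf{Monotonicity.} If $\mathcal{A}\subseteq\mathcal{B}$, then $\mu(\mathcal{A})\le\mu(\mathcal{B})$. It therefore suffices to show that $g$ is nondecreasing on $[0,1]$.
\begin{itemize}
\item On $(0,\infty)$, $g$ is the minimum of an increasing affine function and a constant, so it is nondecreasing there.
\item At the jump, $g(0) = 0 \le \lim_{x\downarrow 0} g(x) = \min\{c,1\}$.
\end{itemize}

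\textbf{Submodularity.} Set $a := \mu(\mathcal{A}\cap\mathcal{B})$, $u := \mu(\mathcal{A}) - a \ge 0$, and $w := \mu(\mathcal{B}) - a \ge 0$. Additivity gives $\mu(\mathcal{A}\cup\mathcal{B}) = a+u+w$. The claim becomes
\[
g(a+u) + g(a+w) \;\ge\; g(a+u+w) + g(a),
\]
for all $a,u,w\ge 0$ with $a+u+w\le 1$. This is a diminishing-increments inequality, and it would follow directly if $g$ were concave on $[0,\infty)$. The only obstacle is that $g$ is not concave at $0$: it jumps up there, so $g(x) - g(0)$ is large for small $x$.

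I will handle this by cases.
\begin{itemize}
\item \emph{Case $a > 0$.} All four arguments lie in $(0,\infty)$. There $g(x) = \min\{c+mx, 1\}$ is concave, being a minimum of affine functions. For a concave function, $x \mapsto g(x+u) - g(x)$ is nonincreasing. Hence
\[
g(a+u+w) - g(a+w) \;\le\; g(a+u) - g(a),
\]
which is the claim. The same holds if some of $u,w$ are zero, trivially.
\item \emph{Case $a = 0$ and $u = 0$ (or symmetrically $w = 0$).} The inequality holds with equality.
\item \emph{Case $a = 0$, $u>0$, $w>0$.} The claim reduces to subadditivity, $g(u)+g(w)\ge g(u+w)$. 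If either $g(u)$ or $g(w)$ equals $1$, this holds because $g\le 1$. Otherwise
\[
g(u)+g(w) = 2c + m(u+w) \;\ge\; c + m(u+w) \;\ge\; g(u+w),
\]
using $c \ge 0$.
\end{itemize}

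The main care point is the $a = 0$ case, where the discontinuity of $g$ at zero breaks concavity. It is resolved by the subadditivity bound above, which needs only $c\ge 0$ and $g\le 1$.

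Finally, note that sets with $\mu(\mathcal{A}) = 0$ but $\mathcal{A} \neq \emptyset$ are handled automatically, because everything above depends only on the values of $\mu$.
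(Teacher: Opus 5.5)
Your proof is correct, but it takes a more hands-on route than the paper. The paper does not verify the submodular inequality directly. Instead it invokes the general fact (\cref{lem:submodular-concave}) that $\mathcal{A}\mapsto g(\bm{1}\tr\bm{\mu}_{\mathcal{A}})$ is submodular whenever $g$ is concave and monotone and $\bm{\mu}\ge\bm{0}$. It then only checks that $g$ is concave on $[0,1]$, handling the endpoint via $(1-\lambda)g(0)+\lambda g(b)=\min\{\lambda c+\lambda m b,\lambda\}\le\min\{c+m\lambda b,1\}=g(\lambda b)$, which uses exactly $c\ge 0$. This shows that your remark that ``$g$ is not concave at $0$'' is mistaken. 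Having $g(0)$ \emph{below} $\lim_{x\downarrow 0}g(x)$ is perfectly compatible with concavity on $[0,\infty)$; what concavity forbids is $g(0)$ lying \emph{above} that limit. A large increment $g(x)-g(0)$ for small $x$ is precisely what diminishing increments allow. So your case $a=0$ is already covered by the same nonincreasing-increments argument you use for $a>0$, applied on $[0,\infty)$ rather than $(0,\infty)$. Your subadditivity computation is a correct, self-contained check of that special case (concavity with $g(0)=0$ is exactly what makes $g$ subadditive), so nothing is lost. The trade-off is this: your argument is elementary and spells out monotonicity and normalization, which the paper dismisses as algebraic manipulation. The paper's reduction is shorter and reusable for any concave, monotone profile $g$, not just the piecewise-linear EWS form.
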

\begin{proof} 
To show that $f$ is submodular, it is sufficient, from \cref{lem:submodular-concave}, to show that $g$ is concave on $[0,1]$. The function $g$ is concave on $(0, 1]$ because it is the minimum of two linear functions~\cite{boyd2004convex}. The concavity of $g$ on $[0, 1]$ follows from the definition. Since $g(0) = 0$, for each $\lambda \in (0,1]$ and $b \in ( 0, 1]$ we get that
\begin{align*}
  (1 - \lambda)g(0) + \lambda \cdot g(b)
  = \lambda \cdot g(b)
  = \min\left\{\lambda  \cdot c + m\cdot \lambda\cdot b, \lambda\right\} 
  \leq \min \left\{ c + m \cdot \lambda \cdot b, 1 \right\}
  = g\left( (1 - \lambda)\cdot 0 + \lambda\cdot b\right).
\end{align*}
Because the inequality also holds for $\lambda = 0$, $g$ is concave on $[0,1]$ by the definition. The monotonicity, non-negativity and the boundary value conditions for $f$ follow by algebraic manipulation from the assumptions on $c$ and
$m$.
\end{proof}

\begin{algorithm}
  \caption{\texttt{QuickDivergence}: Linear minimization over EWS polymatroids.} \label{alg:maximum-weight-independent}
  \Input{$\bm{x} \in \Real^n$, $\bm{p} \in \Delta_n$, $\ews(\bm{p}, m, c)$ function $f \colon 2^{1{:}n} \to \Real_{\ge 0}$}
  \Output{$\bm{q}\opt \in \argmin_{\bm{q} \in \widehat{\mathcal{Q}}_f} \bm{x}\tr\bm{q}$}
  Choose $k_{\min} \in \argmin_{i \in  1{:}n} x_i \; \operatorname{s.\,t.}\; p_i > 0$; break ties to choose the smallest $i$ \tcp{Analogous to $\ess\inf$}
    $v \gets
    \begin{cases}
      \operatorname{QuickVaR}(\bm{x}, \bm{p}, (1 - c)/m), &\text{ if } (1-c) / m < 1, \\
      \infty, &\text{ otherwise}
    \end{cases}
     $\;
  $\mathcal{K}_{< v} \gets \{ i \in  1{:}n \mid x_i < v \}$ ;~
  $\mathcal{K}_{= v} \gets \{ i \in 1{:}n \mid x_i = v \}$ \;
  $l \gets |\mathcal{K}_{< v} \cup \{ k_{\min} \} |$\;
  $\theta_l \gets  1 - \min \left\{ c + m\cdot \bm{1}\tr \bm{p}_{\mathcal{K}_{<v} \cup \{k_{\min}\} }, 1 \right\}$ \;
  \For{$i \in 1{:}n$}{
    \lIf{$i = k_{\min}$}{$q\opt_i \gets \min \left\{  m\cdot p_i + c, 1\right\}$ }
   \lElseIf{$i \in \mathcal{K}_{< v}$} {$q\opt_i \gets m \cdot p_i$}
   \lElseIf{$i \in \mathcal{K}_{=v}$} {$q\opt_i \gets \min \left\{  \theta_l, \, m \cdot p_i \right\} $;~ $\theta_{l+1}\gets \theta_l - q\opt_i;~ l \gets l+1$}
   \lElse{$q\opt_i \gets 0$}
  }
  \Return{$\bm{q}\opt$}
 \end{algorithm}

\cref{alg:maximum-weight-independent} describes \emph{QuickDivergence}, an algorithm for polymatroid optimization. QuickDivergence differs from the standard linear polymatroid algorithm in \cref{alg:greedy-polymatroid} in that it does not require sorting the input vector. Instead, the algorithm uses QuickVaR to partially sort the input vector.

The following theorem states the correctness and runtime of \cref{alg:maximum-weight-independent}. 
\begin{theorem}
\Cref{alg:maximum-weight-independent} solves~\eqref{eq:comonotonic-optimization} in expected $O(n)$ runtime.
\end{theorem}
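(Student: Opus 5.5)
The plan is to reduce correctness to the greedy algorithm, \cref{alg:greedy-polymatroid}, which is correct by \cref{prop:standard-polymatroid-works}. The key fact is that greedy works for \emph{any} permutation $\sigma$ sorting $\bm{x}$ nondecreasingly, with ties broken arbitrarily. So it suffices to exhibit one such permutation $\sigma$ for which the greedy increments $f(\sigma(1{:}k)) - f(\sigma(1{:}(k-1)))$ coincide with the values $q\opt_i$ produced by \cref{alg:maximum-weight-independent}.

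\textbf{The permutation.} I would take $\sigma$ in the following order:
\begin{enumerate}
\item first the elements of $\mathcal{K}_{<v}$ with $x_i < x_{k_{\min}}$ (all of these have $p_i = 0$ by the choice of $k_{\min}$), together with those zero-probability indices tied with $x_{k_{\min}}$ that precede it;
\item then $k_{\min}$;
\item then the rest of $\mathcal{K}_{<v}$ in any order;
\item then $\mathcal{K}_{=v}$ in the order the for-loop visits them, with $k_{\min}$ excluded if it lies there;
\item finally the elements with $x_i > v$.
\end{enumerate}
This is a valid nondecreasing order because $x_{k_{\min}} \le v$. The inequality $x_{k_{\min}} \le v$ holds because $v = x_\omega$ with $\P{\tilde{x} \le v} > \alpha \ge 0$, by \cref{thm:var-correct} and \cref{thm:quantile-in-finite-omega}.

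\textbf{Greedy increments, in order.} Write $\alpha := (1-c)/m$ and $S_k := \bm{1}\tr \bm{p}_{\sigma(1{:}k)}$.
\begin{itemize}
\item Zero-probability elements placed before $k_{\min}$ have $S_k = 0$, so $g(0) = 0$ gives increment $0$, which matches $q\opt_i = m\cdot 0$.
\item At $k_{\min}$ the increment is $g(p_{k_{\min}}) - 0 = \min\{c + m p_{k_{\min}}, 1\}$, matching the first branch of the loop.
\item For the remaining elements of $\mathcal{K}_{<v}$, every prefix mass satisfies $S_k \le \P{\tilde{x} < v} + p_{k_{\min}}\mathbb{1}[x_{k_{\min}} = v]$. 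In the generic case this is at most $\P{\tilde{x}<v} \le \alpha$, using \eqref{eq:var-property-prob}. Hence $c + m S_k \le 1$, the minimum in $g$ is inactive, and the increment is exactly $m p_i$.
\end{itemize}
The subcase $x_{k_{\min}} = v$ needs separate care. Then all of $\mathcal{K}_{<v}$ has zero probability, the cap may already be active at $k_{\min}$, and $\theta_l$ correctly records the residual $1 - \min\{c + m\,p_{k_{\min}}, 1\}$.

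On $\mathcal{K}_{=v}$, while $c + m S_{k-1} < 1$ the increment is $\min\{c + m S_k, 1\} - (c + m S_{k-1}) = \min\{m p_i, \theta\}$. Here $\theta = 1 - (c + m S_{k-1})$ is exactly the running $\theta_l$ maintained by the algorithm. Once the cap is hit, both sides are $0$. Because $\alpha < \P{\tilde{x} \le v}$, we have $c + m\,\P{\tilde{x}\le v} > 1$, so $g$ reaches $1$ by the end of $\mathcal{K}_{=v}$. All later increments are therefore $0$, matching the else branch.

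\textbf{The case $\alpha \ge 1$.} Here $v = \infty$ and everything lies in $\mathcal{K}_{<v}$. Combined with $c + m \ge 1$, we get $c + m = 1$, so $c + m S_k \le 1$ always. The increments are then $m p_i$, together with $\min\{c + m p_{k_{\min}}, 1\}$ at $k_{\min}$, as the algorithm outputs. I expect the main obstacle to be this bookkeeping, together with the tie cases around $k_{\min}$ (it lying in $\mathcal{K}_{=v}$, or zero-probability indices with smaller $x$). I would handle them by the explicit placement of zero-mass elements in $\sigma$ described above.

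\textbf{Conclusion and runtime.} Given that the outputs match, \cref{prop:standard-polymatroid-works} shows that $\bm{q}\opt$ minimizes $\bm{x}\tr\bm{q}$ over $\widehat{\mathcal{Q}}_f$. \Cref{prop:coherent-polymatroid} then identifies this minimum with \eqref{eq:comonotonic-optimization}. For the runtime:
\begin{itemize}
\item finding $k_{\min}$ takes one linear scan;
\item QuickVaR runs in expected $O(n)$ time by \cref{prop:time-complexity};
\item building $\mathcal{K}_{<v}$ and $\mathcal{K}_{=v}$ and the sum defining $\theta_l$ take linear scans;
\item the final loop does $O(1)$ work per index.
\end{itemize}
The total expected runtime is therefore $O(n)$.
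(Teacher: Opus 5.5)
Your proposal is correct and takes essentially the same route as the paper. Both show that QuickDivergence's output coincides coordinate-by-coordinate with the greedy increments of \cref{alg:greedy-polymatroid} for a suitably chosen sorted permutation (the paper uses a stable sort, you use a block ordering), and both treat $k_{\min}$, $\mathcal{K}_{<v}$, $\mathcal{K}_{=v}$ (via $\theta_l = 1 - f(\text{prefix})$) and $\{x_i > v\}$ separately; your explicit handling of the cases $x_{k_{\min}} = v$ and $v=\infty$ is, if anything, more careful than the paper's. Two small fixes: the elements within each block (in particular the ``rest of $\mathcal{K}_{<v}$'', which you currently allow ``in any order'') must still be sorted by $x$ for \cref{prop:standard-polymatroid-works} to apply, and the zero-mass ties you place before $k_{\min}$ should not be listed again in $\mathcal{K}_{=v}$. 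Neither change affects the increments.
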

\begin{proof}
To prove the correctness of \cref{alg:maximum-weight-independent}, we show that it returns $\bm{q}\opt$ that equals $\hat{\bm{q}}$ returned by the standard \cref{alg:greedy-polymatroid}.  Suppose that \cref{alg:greedy-polymatroid} chooses a permutation $\sigma\colon  1{:}n \to 1{:}n$ such that $i \le j \Rightarrow x_{\sigma(i)} \le x_{\sigma(j)}$ and $x_{\sigma(i)} = x_{\sigma(j)} \wedge i < j \Rightarrow \sigma(i) < \sigma(j)$ for each $i,j\in 1{:}n$. In other words, the permutation $\sigma$ needs to be generated by a stable sort algorithm. We emphasize that the stability assumption is only needed for our proof technique; the actual implementation does not require it.

We now prove that $q\opt_i = \hat{q}_i$ for each $i\in 1{:}n$ by separately analyzing the following four cases.

\emph{(i)} Suppose that $i = k_{\min}$. Because $p_i > 0$ from the construction of $k_{\min}$ then
\[
  q\opt_i = \min \left\{   m\cdot p_i + c, 1 \right\} = g(p_i) =  g(p_i) - g(0) = f(\{ i \} \cup \mathcal{J}) - f(\mathcal{J})  = \hat{q}_i,
\]
where $\mathcal{J} := \{ j \in 1{:}n \mid \sigma^{-1}(j) < \sigma^{-1}(i) \}$ and $\bm{1}_{\mathcal{J}}\tr \bm{p} = 0$ by the construction of $k_{\min}$.

\emph{(ii)} Suppose that $i \in \mathcal{K}_{<v} \wedge i \neq k_{\min}$. From \cref{thm:quantile-in-finite-omega}, we get for each $\mathcal{O} \subseteq \mathcal{K}_{<v}$ that
\begin{equation} \label{eq:lt-v-prob}
 \bm{1}\tr \bm{p}_{\mathcal{O}} \le \P{\tilde{x} < v} \le \frac{1-c}{m},
\end{equation}
and, hence, from~\eqref{eq:def-g}:
\(  f(\mathcal{O}) = g\left(\bm{1}\tr \bm{p}_{\mathcal{O}}\right) = c + m \cdot \bm{1}\tr \bm{p}_{\mathcal{O}},
  \text{ when }
  \bm{1}\tr \bm{p}_{\mathcal{O}} > 0.
\) 
Let $k := \sigma^{-1}(i)$ and note that $ \sigma(1{:}k) \subseteq \mathcal{K}_{< v} $ because if $l \in 1{:}k$, then $x_{\sigma(l)} \le x_{\sigma(k)} = x_i < v$ and $\sigma(l) \in \mathcal{K}_{< v}$. Then, since $i \neq k_{\min}$:
\[
 \hat{q}_i = \hat{q}_{\sigma(k)} = f(\sigma(1{:}k)) - f(\sigma(1{:}(k-1)))
= m \cdot \bm{1}\tr \bm{p}_{\sigma(1{:}k) \setminus \sigma(1{:}(k-1)) } = m\cdot p_{\sigma(k)} = m\cdot p_i = q\opt_i. 
\]
The third equality follows from $i \neq k_{\min}$ and $p_{k_{\min}} \neq  0$.

\emph{(iii)} Suppose that $i \in \mathcal{K}_{=v} \wedge i \neq k_{\min}$. Here, we use the assumption that $\sigma$ is a stable sort and prove the claim by induction on $l \in |\mathcal{K}_{<v} \cup \{k_{\min}\}|, \dots, |\mathcal{K}_{<v}\cup \{k_{\min}\}| + |\mathcal{K}_{=v} \setminus \{ k_{\min}\}|$ showing that
\begin{align}
 \label{eq:theta-equality-1}
  \theta_l &= 1 - f(\sigma(1{:}l)),  \\
  \label{eq:theta-equality-2}
    \hat{q}_{\sigma(l+1)} &= q\opt_{\sigma(l+1)}.
\end{align}
To prove the base case, suppose that $l = |\mathcal{K}_{<v} \cup \{k_{\min}\}|$. The equality in~\eqref{eq:theta-equality-1} holds by algebraic manipulation from~\eqref{eq:def-g} since $\mathcal{K}_{<v} \cup \{ k_{\min} \} = \sigma(1{:}l)$. The equality in~\eqref{eq:theta-equality-2} follows from~\eqref{eq:def-g} and~\eqref{eq:lt-v-prob}. First, assume that $\theta_l > 0$; then
\begin{equation} \label{eq:q-q-eq}
  \begin{aligned}
\hat{q}_{i} = \hat{q}_{\sigma(l+1)}  
  &= f ( \sigma(1{:}(l+1) )) - f ( \sigma(1{:}l) ) \\
  &= \min \left\{  1,\; c + m \cdot \bm{1}\tr \bm{p}_{\sigma(1{:}(l+1))} \right\} - c - m \cdot \bm{1}\tr \bm{p}_{\sigma(1{:}l)} \\
  &= \min \left\{1  - c - m \cdot \bm{1}\tr \bm{p}_{\sigma(1{:}l)}, \; m \cdot p_{\sigma(l+1)}\right\} \\
  &= \min \{ \theta_l, m \cdot p_i \} = q_i\opt.
  \end{aligned}
\end{equation}
When $\theta_l = 0$, we get by algebraic manipulation that $\hat{q}_i = 0 = q\opt_i$.

To prove the inductive case, suppose that~\eqref{eq:theta-equality-1} and~\eqref{eq:theta-equality-2} hold for $l$, and we prove it for $l+1$. The equality in~\eqref{eq:theta-equality-1} can be derived from~\eqref{eq:theta-equality-2} and the construction in \cref{alg:greedy-polymatroid} as
\begin{align*}
  \theta_{l+1}
  = \theta_l - q\opt_{\sigma(l+1)} 
    = \theta_l - \hat{q}_{\sigma(l+1)}
    = 1 - f(\sigma(1{:}l)) - \left(  f(\sigma(1{:}(l+1))) - f(\sigma(1{:}l)) \right) 
  = 1 - f(\sigma(1{:}(l+1))).
\end{align*}
The equality in~\eqref{eq:theta-equality-2} follows analogously to~\eqref{eq:q-q-eq} if $\theta_{l+1} > 0$:
\begin{equation} \label{eq:q-q-eq-2}
  \begin{aligned}
\hat{q}_{i} = \hat{q}_{\sigma(l+2)}
  &= f ( \sigma(1{:}(l+2) )) - f(\sigma(1{:}(l+1))) \\
  &= \min \left\{  1,\; c + m \cdot \bm{1}\tr \bm{p}_{\sigma(1{:}(l+2))} \right\} - (1-\theta_{l+1}) \\
  &= \min \left\{\theta_{l+1}, \;c + m \cdot \bm{1}\tr \bm{p}_{\sigma(1{:}(l+2))} - f(\sigma(1{:}(l+1))) \right\} \\
  &= \min \left\{\theta_{l+1}, \; m \cdot p_{\sigma(l+2)}  \right\} 
  = q_i\opt.
  \end{aligned}
\end{equation}
If $\theta_{l+1} = 0$, we get by algebraic manipulation that $\hat{q}_i = 0 = q\opt_i$.

\emph{(iv)} Suppose that $i \in \mathcal{K}_{>v} := \{ j \in 1{:}n \mid x_j > v \}$ and let $k = \sigma^{-1}(i)$. Then, using~\eqref{eq:var-property-prob} we have that:
\begin{align*}
  1
  \ge f(\sigma(1{:}k))
  \ge f(\sigma(1{:}(k-1))) 
  \ge \min \{c + m \cdot \bm{1}\tr \bm{p}_{\sigma(1{:}(k-1))}, 1\}
  \ge 1,
\end{align*}
since $\bm{1}\tr\bm{p}_{\sigma(1{:}(k-1))} > \frac{1-c}{m}$ from \cref{thm:quantile-in-finite-omega} and the construction of $v$ in \cref{alg:maximum-weight-independent}. Then
\[
  \hat{q}_i = \hat{q}_{\sigma(k)}
  =  f(\sigma(1{:}k)) - f(\sigma(1{:}(k-1)))
  = 0 = q\opt_i.
\]

\end{proof}

\subsection{Solving CVaR and TVaR in Linear Time} \label{sec:cvar-tvar-are}

We show that CVaR and TVaR are two examples of $\varphi$-divergence risk measures that can be formulated as linear optimization over a polymatroid with an EWS capacity function and, therefore, can be solved in linear time. To formulate the CVaR polymatroid optimization, we define the capacity function $f_{\alpha}^{\cvaro}\colon 2^{\Omega } \to \Real$  for each $\mathcal{O} \subseteq \Omega$ and $\alpha \in [0,1]$ as
\begin{equation} \label{eq:f-cvar}
  f_{\alpha}^{\cvaro}(\mathcal{O})
    \;:=\;
      \begin{cases}
        \min\left\{\frac{1}{\alpha} \cdot \P{\mathcal{O}}, 1\right\} & \text{if } \alpha \in (0,1], \\
        0 & \text{if } \alpha = 0 \wedge \P{\mathcal{O}} = 0,  \\
        1 & \text{if } \alpha = 0 \wedge \P{\mathcal{O}} > 0. 
      \end{cases} 
\end{equation}
The Choquet capacity function $f_{\alpha }^{\cvaro}$ for CVaR defined in~\eqref{eq:f-cvar} is well known, but is usually restricted to $\alpha\in (0,1)$~\cite{acerbi2002spectral}. The following proposition states its correctness.
\begin{proposition} \label{prop:cvar-ews-correct}
For each $\alpha\in [0,1]$, $\tilde{x} \colon \Omega \to \Real$, and $\bm{p} \in \Delta_n$: 
\begin{equation} \label{eq:cvar-correct}
  \cvar{\alpha }{\tilde{x} }
  \;=\;
  \min_{\bm{q} \in \widehat{\mathcal{Q}}_{f_{\alpha }^{\cvaro}}} \bm{x}\tr \bm{q}.
\end{equation}
Moreover, $f^{\cvaro}_{\alpha }$ is $\ews(\bm{p}, 1/\alpha, 0)$ when $\alpha > 0$, and $\ews(\bm{p}, 1, 1)$ when $\alpha = 0$.
\end{proposition}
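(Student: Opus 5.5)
The plan is to handle the EWS claim first, since it is a direct check against \cref{asm:submodular-function}, and then prove \eqref{eq:cvar-correct} separately for $\alpha\in(0,1]$ and for $\alpha = 0$.

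\emph{EWS claim.} For $\alpha>0$, take $\bm{\mu}=\bm{p}$, $m=1/\alpha$, and $c=0$. The conditions $c\le 1$ and $c+m = 1/\alpha \ge 1$ hold because $\alpha\le 1$. The resulting function is $g(x)=\min\{x/\alpha,1\}$ for $x>0$ and $g(0)=0$, which matches \eqref{eq:f-cvar}. For $\alpha=0$, take $m=c=1$. Then $g(x)=\min\{1+x,1\}=1$ for $x>0$ and $g(0)=0$, which is exactly the $\alpha=0$ branch of \eqref{eq:f-cvar}.

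\emph{Case $\alpha\in(0,1]$.} I would show that the normalized polymatroid coincides with the CVaR feasible set in \eqref{eq:primal_cvar}:
\[
\widehat{\mathcal{Q}}_{f}=\left\{\bm{q}\in\Delta_n \mid \bm{q}\le \alpha^{-1}\bm{p}\right\}.
\]
For the inclusion $\subseteq$, apply the polymatroid constraint to singletons $\mathcal{U}=\{\omega\}$; this gives $q_\omega\le \min\{p_\omega/\alpha,1\}\le p_\omega/\alpha$. For the inclusion $\supseteq$, fix $\bm{q}\in\Delta_n$ with $\bm{q}\le\bm{p}/\alpha$ and any $\mathcal{U}$. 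Summing the coordinate bounds gives $\bm{1}\tr\bm{q}_{\mathcal{U}}\le \P{\mathcal{U}}/\alpha$, and $\bm{q}\in\Delta_n$ gives $\bm{1}\tr\bm{q}_{\mathcal{U}}\le 1$. Hence $\bm{1}\tr\bm{q}_{\mathcal{U}}\le f(\mathcal{U})$. Once the sets are equal, the two minimization problems are identical, and \eqref{eq:primal_cvar} gives \eqref{eq:cvar-correct}.

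\emph{Case $\alpha=0$.} Here $\cvaro_0[\tilde{x}]=\ess\inf[\tilde{x}]$ by definition. I would characterize the polymatroid as
\[
\widehat{\mathcal{Q}}_{f}=\{\bm{q}\in\Delta_n \mid \bm{q}\ll\bm{p}\}.
\]
For the inclusion $\subseteq$, take $\mathcal{U}=\{\omega\}$ with $p_\omega=0$. Then $f(\mathcal{U})=0$, which forces $q_\omega=0$. For the inclusion $\supseteq$, consider any $\mathcal{U}$. If $\P{\mathcal{U}}=0$, then $\bm{q}\ll\bm{p}$ gives $\bm{1}\tr\bm{q}_{\mathcal{U}}=0=f(\mathcal{U})$. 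Otherwise $f(\mathcal{U})=1\ge \bm{1}\tr\bm{q}_{\mathcal{U}}$. The minimum of the linear function $\bm{x}\tr\bm{q}$ over distributions supported on $\{\omega \mid p_\omega>0\}$ is $\min\{x_\omega \mid p_\omega>0\}=\ess\inf[\tilde{x}]$. It is attained by the point mass at $k_{\min}$, and it is a lower bound because every feasible $\bm{q}$ is a convex combination of such values.

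I expect no step to be a real obstacle. The only point needing care is the $\alpha=0$ convention, where the continuous extension of CVaR must be matched to the degenerate capacity; the characterization of the polymatroid in that case is what makes this work.
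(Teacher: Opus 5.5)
Your proof is correct, but for $\alpha\in(0,1]$ it takes a different route from the paper.

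\emph{The paper's route.} It invokes \cref{prop:coherent-polymatroid}, the F\"ollmer--Schied representation of coherent comonotonic risk measures. It then verifies that $f^{\cvaro}_{\alpha}(\mathcal{O}) = -\cvar{\alpha}{-\tilde{1}_{\mathcal{O}}}$. To do so it computes $\max\{\bm{1}_{\mathcal{O}}\tr\bm{q} \mid \bm{q}\in\Delta_n,\ \bm{q}\le\alpha^{-1}\bm{p}\}$ as two inequalities, with explicit maximizers in the cases $\P{\mathcal{O}}\ge\alpha$ and $\P{\mathcal{O}}<\alpha$.

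\emph{Your route.} You prove directly that the normalized polymatroid \emph{is} the CVaR risk envelope. The key observation is that $f^{\cvaro}_{\alpha}$ is the minimum of a modular set function and the constant $1$. Hence all polymatroid constraints are implied by the singleton constraints together with $\bm{1}\tr\bm{q}=1$.

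\emph{What each buys.} Your argument is shorter and self-contained. It does not need the comonotonicity of CVaR, which \cref{prop:coherent-polymatroid} requires but the paper never proves. It also avoids the external representation theorem and the feasible-point constructions. Finally, it gives a genuine argument for $\alpha=0$, which the paper dismisses as algebraic manipulation. The paper's route instead follows a uniform template: compute the capacity $-\rho[-\tilde{1}_{\mathcal{O}}]$ and appeal to the representation theorem. It reuses this template for TVaR, and it explains where the capacity comes from rather than checking a guessed set identity.

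Your set-equality approach would also extend to TVaR. For $\bm{p},\bm{q}\in\Delta_n$ with $\bm{q}\ll\bm{p}$, one has $\max_{\mathcal{U}}\bm{1}\tr(\bm{q}-\bm{p})_{\mathcal{U}} = \tfrac{1}{2}\|\bm{q}-\bm{p}\|_1$. The maximum is attained at a set $\mathcal{U}$ of positive probability whenever $\bm{q}\neq\bm{p}$. So that polymatroid also coincides with the $L_1$-ball envelope.
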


To compute TVaR, we define the capacity function $f_{\alpha}^{\tvaro}\colon 2^{\Omega } \to \Real$  for each $\mathcal{O} \subseteq \Omega$ and $\alpha \in [0,1]$ as
\begin{equation}\label{eq:f-tvar}
  f^{\tvaro}_{\alpha}(\mathcal{O})
    \; :=\; 
      \begin{cases}
        \min\left\{\P{\mathcal{O}} + \min \left\{ \sqrt{\frac{1}{2} \log \frac{1}{\alpha}}, 1\right\}, 1\right\} &\text{if } \alpha \in (0,1] \wedge \P{\mathcal{O}} > 0, \\
        0 & \text{if } \P{\mathcal{O}} = 0, \\
        1 & \text{if } \alpha = 0 \wedge \P{\mathcal{O}} > 0.
      \end{cases}
\end{equation}
TVaR was first proposed in the context of $\varphi$-divergence risk measures~\cite{Kruse2021} as well as in the context of robust stochastic programming~\cite{Shapiro2017}. TVaR is also very closely related to the $L_1$ ball used in robust Markov decision processes~\cite{Ho2021,Ho2022}. However, we are not aware of prior work that defines or uses the capacity function in~\eqref{eq:f-tvar}. The following proposition shows that this function can be used to correctly formulate the TVaR evaluation using an EWS function. 
\begin{proposition} \label{prop:tvar-ews-correct}
For each $\alpha\in [0,1]$, $\tilde{x} \colon \Omega \to \Real$, and $\bm{p} \in \Delta_n$: 
\begin{equation} \label{eq:tvar-correct}
  \tvar{\alpha}{\tilde{x} } = \min_{\bm{q}\in \widehat{\mathcal{Q}}_{f_{\alpha}^{\tvaro}}} \bm{x}\tr \bm{q}.
\end{equation}
Moreover, $f^{\tvaro}_{\alpha }$ is $\ews(\bm{p}, 1, \min \left\{   \sqrt{\frac{1}{2}  \log \frac{1}{\alpha}}, 1 \right\})$  when $\alpha > 0$, and $\ews(\bm{p}, 1, 1)$ when $\alpha = 0$.
\end{proposition}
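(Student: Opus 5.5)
The plan is to prove \cref{prop:tvar-ews-correct} in two parts: first the EWS claim, which is a direct check, and then the identity \eqref{eq:tvar-correct}, split by whether $\alpha>0$ or $\alpha=0$. Throughout, let $\delta := \min\{\sqrt{\tfrac12\log\tfrac1\alpha},1\}$, so that $2\delta = \min\{\sqrt{2\log\tfrac1\alpha},2\}$ is exactly the radius in \eqref{eq:tvar-dual}.

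\emph{EWS claim.} For $\alpha>0$ I compare \eqref{eq:f-tvar} with \eqref{eq:def-g}, taking $\bm\mu=\bm p$, $m=1$ and $c=\delta$. The conditions $c\in[0,1]$ and $c+m\ge1$ hold trivially. For $\alpha=0$, the function equals $1$ when $\P{\mathcal O}>0$ and $0$ otherwise, which is $g(x)=\min\{1+x,1\}=1$ for $x>0$, i.e. $\ews(\bm p,1,1)$.

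\emph{Identity for $\alpha>0$.} The key reduction is to describe the TVaR ambiguity set explicitly. For $\bm q\in\Delta_n$ we have $\|\bm p-\bm q\|_1=2\sum_\omega(p_\omega-q_\omega)_+$. So $\|\bm p-\bm q\|_1\le2\delta$ means that at most $\delta$ of the mass is removed from $\bm p$ and moved elsewhere within $\operatorname{supp}\bm p$, the support restriction coming from $\bm q\ll\bm p$.
\begin{itemize}
\item[] \emph{Inclusion of the TVaR set in the polymatroid.} Take a feasible $\bm q$ and any $\mathcal U$. If $\P{\mathcal U}=0$, then $\bm 1\tr\bm q_{\mathcal U}=0$ by absolute continuity. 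Otherwise $\bm 1\tr\bm q_{\mathcal U}\le \P{\mathcal U}+\sum_{\omega}(q_\omega-p_\omega)_+ \le \P{\mathcal U}+\delta$, and also $\bm 1\tr\bm q_{\mathcal U}\le1$. Hence the TVaR feasible set lies inside $\widehat{\mathcal Q}_{f^{\tvaro}_\alpha}$, which gives ``$\ge$'' in \eqref{eq:tvar-correct}.
\item[] \emph{Reverse inequality.} Since $\widehat{\mathcal Q}$ could a priori be larger, I avoid a set-level comparison and argue at the optimum. The minimizer over the polymatroid is $\hat{\bm q}$ from \cref{alg:greedy-polymatroid} (justified by \cref{prop:standard-polymatroid-works}), or equivalently the output of \cref{alg:maximum-weight-independent}. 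It has the explicit form: $q_{k_{\min}}=\min\{p_{k_{\min}}+\delta,1\}$, all other atoms in increasing order of $x$ receive $p_i$ until the total mass reaches $1$, and the rest receive $0$. I will check directly that this $\hat{\bm q}$ lies in $\Delta_n$, is supported in $\operatorname{supp}\bm p$ (zero-probability atoms get $f$-increments of $0$), and that the mass it removes from $\bm p$ equals the mass added at $k_{\min}$, which is at most $\delta$. So $\|\bm p-\hat{\bm q}\|_1\le2\delta$, $\hat{\bm q}$ is TVaR-feasible, and ``$\le$'' follows.
\end{itemize}

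\emph{Identity for $\alpha=0$.} Both sides equal $\ess\inf[\tilde x]$. The left side is this by the continuous extension. On the right, the polymatroid with the $0/1$ capacity is exactly the set of distributions absolutely continuous with respect to $\bm p$, and minimizing $\bm x\tr\bm q$ over that set puts all mass on $k_{\min}$.

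\emph{Main obstacle.} The delicate step is the reverse inequality, namely certifying that the greedy vertex is TVaR-feasible, including the edge cases: $p_{k_{\min}}+\delta\ge1$, ties at $v$, and zero-probability atoms. This has to be handled through the explicit greedy increments rather than by proving the two sets coincide. I do not expect the sets to coincide in general, since the polymatroid can allow moving mass to several atoms; only the objective values agree.
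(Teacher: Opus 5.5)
Your proof is correct, but it follows a different route from the paper.

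\textbf{The paper's route.} The paper works at the level of capacities. For each $\mathcal O$ it shows $-\tvaro_\alpha[-\tilde 1_{\mathcal O}] = f^{\tvaro}_\alpha(\mathcal O)$. The upper bound uses the same inequality as your inclusion step, $\bm 1_{\mathcal O}\tr(\bm q-\bm p)\le \frac12\|\bm p-\bm q\|_1$. The lower bound uses explicit maximizers, after splitting off the cases $\bm 1_{\mathcal O}\tr\bm p\in\{0,1\}$ and $\delta=1$. It then invokes \cref{prop:coherent-polymatroid}. This explains why $f^{\tvaro}_\alpha$ is the right capacity (it is the Choquet capacity of TVaR) and mirrors the CVaR proof. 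However, the ``only if'' direction of \cref{prop:coherent-polymatroid} requires TVaR to be comonotonic. That does not follow from coherence alone, and the paper does not verify it.

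\textbf{What your route buys.} You compare the problems directly: the TVaR set lies in $\widehat{\mathcal Q}_{f^{\tvaro}_\alpha}$, and the greedy vertex, optimal by \cref{prop:standard-polymatroid-works}, is TVaR-feasible. This uses only greedy correctness and the submodularity of EWS functions. It treats all $\alpha>0$ uniformly, including $\delta=0$ and $\delta=1$. It also yields comonotonicity of TVaR as a byproduct. One small point: \cref{prop:standard-polymatroid-works} is stated for $\bm x\ge\bm 0$, but shifting $\bm x$ by a constant changes both sides of \eqref{eq:tvar-correct} by the same amount, so nothing is lost.

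\textbf{A correction to your closing remark.} The two feasible sets do coincide for every $\alpha>0$, and seeing this removes what you called the main obstacle. Take $\bm q\in\widehat{\mathcal Q}_{f^{\tvaro}_\alpha}$.
\begin{itemize}
\item The constraint on $\mathcal Z:=\{\omega : p_\omega=0\}$, where the capacity is $0$, gives $\bm q\ll\bm p$.
\item Hence $\mathcal U:=\{\omega : q_\omega>p_\omega\}$ is either empty or has $\bm 1\tr\bm p_{\mathcal U}>0$.
\item The constraint on $\mathcal U$, namely $\bm 1\tr\bm q_{\mathcal U}\le \bm 1\tr\bm p_{\mathcal U}+\delta$, reads $\sum_\omega (q_\omega-p_\omega)_+\le\delta$, i.e.\ $\|\bm p-\bm q\|_1\le 2\delta$.
\end{itemize}
Moving mass onto several atoms is equally allowed in the $L_1$ ball. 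The single constraint on $\mathcal U$ controls exactly the total mass added. With this, the reverse inequality is immediate at the set level. You then need no analysis of greedy increments, ties, zero-probability atoms, or the case $p_{k_{\min}}+\delta\ge 1$. Your greedy-vertex argument remains valid, just longer than necessary.
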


\section{Experiments}\label{sec:experiments}

\begin{figure}
  \includegraphics[width=0.5\textwidth]{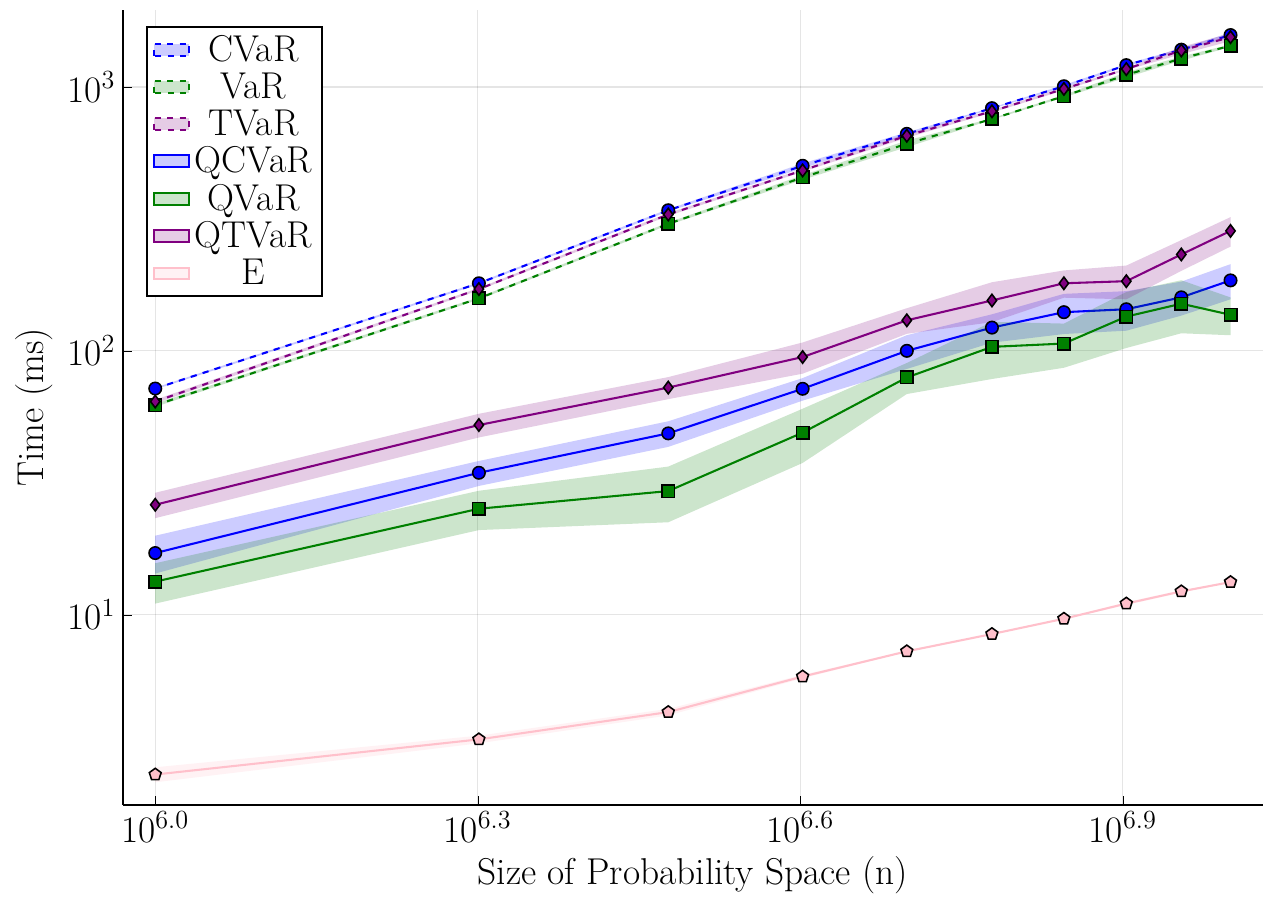}
  \includegraphics[width=0.5\textwidth]{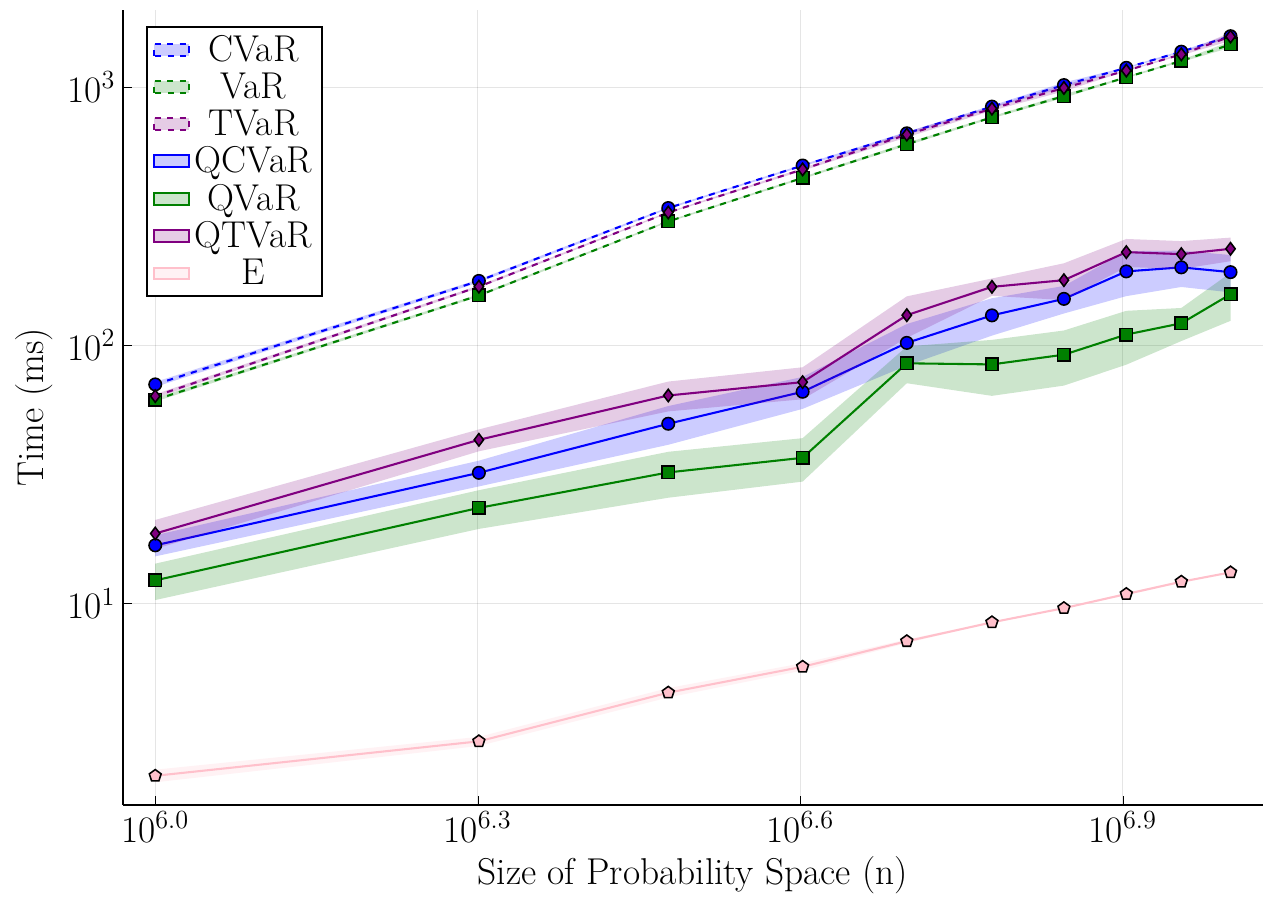}
  \caption{Comparison of runtime of the algorithms for samples from a \emph{uniform} random variable (left) and a \emph{sparse} random variable (right). Shaded regions denote $95\%$ confidence bands over the $10$ runs.}
  \label{fig:benchmarks}
\end{figure}

We evaluate the proposed algorithms empirically to confirm that their theoretical performance guarantees translate into faster practical runtimes. We compare each linear-time algorithm against its standard sorting-based counterpart. Throughout this section, we prefix the name of our linear-time implementation with a ``Q'' to distinguish it from the sorting-based baseline, giving QVaR, QCVaR, and QTVaR. We also report the runtime of computing the standard expectation $\mathbb{E}$ as a baseline.

Our implementation, together with the code and data needed to reproduce all results, is available at \url{https://github.com/marekpetrik/paper_lineartime_risk}, and the actual implementation of the risk measures can be found at \url{https://github.com/RiskAverseRL/RiskMeasures.jl}. All experiments were run on an AMD Ryzen Threadripper 3970X with 32~cores and 64~threads at $4.55$\,GHz with $256$\,GB of RAM using Julia~1.12.5. To reduce noise in the timing measurements, we disable the Julia garbage collector during each run.

We begin with synthetic data drawn from a uniform distribution with $n$ samples, varying $n$ from $10^6$ to $10^7$. For each value of $n$ we run every algorithm $10$ times on independently generated samples and report the mean runtime. \Cref{fig:benchmarks} shows the results on a log-log scale for random variables with a uniform (left) and sparsely-supported (right) distributions. We fix the risk level to $\alpha = 0.95$, though the results are similar for other levels except the degenerate cases $\alpha = 0$ and $\alpha = 1$. The shaded regions denote $95\%$ confidence bands over the $10$ runs; the bands are tight because the measured runtimes vary little between runs.

The results confirm that the quick algorithms are substantially faster than their sorting-based counterparts, with the gap widening as $n$ grows. This behavior matches the theoretical analysis of \cref{sec:var-linear,sec:phi-divergence}: the sorting-based methods require $O(n \log n)$ operations, whereas our algorithms run in $O(n)$ expected time, so the logarithmic factor separating them grows with the size of the probability space. The acceleration is most consequential for VaR and CVaR, which are the most widely used risk measures in the applications discussed in \cref{sec:intro} and are frequently evaluated on large random variables.

\begin{table}
\centering
\caption{Mean runtime of algorithms in the stock market scenario.}
\label{tab:stock}
\begin{tabular}{lrr}
\toprule
Method & Mean (ms) & St. dev. (ms) \\
\midrule 
VaR & 0.37 & 0.00 \\ 
QVaR & 0.13 & 0.02 \\
\midrule 
CVaR & 0.39 & 0.00 \\ 
QCVaR & 0.22 & 0.03 \\ 
\midrule 
TVaR & 0.40 & 0.01 \\ 
QTVaR & 0.25 & 0.05 \\ 
\midrule 
$\mathbb{E}$ & 0.01 & 0.00 \\
\bottomrule
\end{tabular}
\end{table}

\begin{table} 
  \centering
\caption{Mean runtime (ms) of algorithms on smaller sparse random domains.}
\begin{tabular}{rrrrrrrr}
\toprule
n & $\mathbb{E}$ & VaR & QVaR & CVaR & QCVaR & TVaR & QTVaR \\
\midrule
1000 & 0.00 & 0.03 & 0.01 & 0.03 & 0.02 & 0.03 & 0.02\\
2000 & 0.00 & 0.06 & 0.02 & 0.06 & 0.03 & 0.06 & 0.04\\
3000 & 0.01 & 0.09 & 0.04 & 0.10 & 0.05 & 0.10 & 0.07\\
4000 & 0.00 & 0.13 & 0.04 & 0.14 & 0.06 & 0.13 & 0.08\\
5000 & 0.01 & 0.16 & 0.05 & 0.17 & 0.10 & 0.18 & 0.10\\
6000 & 0.00 & 0.20 & 0.08 & 0.21 & 0.10 & 0.21 & 0.11\\
7000 & 0.00 & 0.23 & 0.08 & 0.25 & 0.12 & 0.24 & 0.13\\
8000 & 0.00 & 0.27 & 0.09 & 0.28 & 0.13 & 0.28 & 0.15\\
9000 & 0.01 & 0.30 & 0.09 & 0.32 & 0.12 & 0.31 & 0.17\\
10000 & 0.01 & 0.33 & 0.11 & 0.35 & 0.16 & 0.35 & 0.19\\
\bottomrule
\end{tabular}
\label{tab:small_results}
\end{table}

To confirm the speedups persist beyond large synthetic inputs, we repeat the comparison on two more realistic settings. The first one is a Stock Market scenario, in which the random variable models the returns of a financial portfolio. \Cref{tab:stock} reports the mean runtime and standard deviation of each method. The second one is small-scale problems, where the asymptotic advantage is least pronounced but can be practically important when the risk measures must be computed repeatedly. \Cref{tab:small_results} reports the mean runtime on sparse random variables over probability spaces of size $n$ ranging from $1000$ to $10\,000$, using the same protocol as above but with $50$ trials per configuration in place of $10$. Even at these sizes, the quick algorithms outperform their sorting-based counterparts for every method and every value of $n$, confirming that the linear-time algorithms deliver consistent improvements across the full range of input sizes without incurring any penalty on small inputs.

\section{Conclusion}

We propose and analyze new algorithms for computing several common risk measures, including VaR, CVaR, and TVaR, in $O(n)$ expected time, where $n$ is the size of the probability space. This compares favorably with the $O(n \log n)$ time required by sorting-based methods.  This speedup is significant because risk measures are frequently evaluated on large random variables in applications such as robotics, finance, and infrastructure maintenance. An important direction for future work is to develop fast algorithms for other risk measures, such as EVaR and Expectile.

\bibliography{fastcvar.bib}

\appendix
\crefalias{section}{appendix}
\crefalias{Section}{Appendix}

\section{Supplemental Results for Section~\ref{sec:var-linear}} \label{sec:stand-var-algor}

In this section, we summarize the standard and supporting algorithms for solving VaR. We also include the selected technical proofs for the correctness of the proposed algorithm. 

\begin{lemma} \label{lem:prob-lumpy}
For each $t\in \Real$:
  \[
    \P{\tilde{x} = t} > 0
    \implies
    \exists \omega \in \Omega, x(\omega) = t.
  \]
\end{lemma}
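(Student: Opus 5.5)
We prove the statement by contraposition, using only the definition of the probability of an event on the finite space $(\Omega, 2^{\Omega}, \bm{p})$. The plan is to rewrite the event $\{\tilde{x} = t\}$ as the preimage set
\[
  \mathcal{O}_t := \left\{ \omega\in \Omega \mid x(\omega) = t \right\} \subseteq \Omega ,
\]
so that, by the definition of the measure induced by $\bm{p}$ on the power set,
\[
  \P{\tilde{x} = t} \;=\; \sum_{\omega\in \mathcal{O}_t} p_{\omega}.
\]

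Now suppose that no $\omega\in\Omega$ satisfies $x(\omega) = t$. Then $\mathcal{O}_t = \emptyset$. The sum above is therefore the empty sum, so $\P{\tilde{x} = t} = 0$, which contradicts the hypothesis $\P{\tilde{x} = t} > 0$. Equivalently, $\P{\tilde{x} = t} > 0$ forces $\mathcal{O}_t \neq \emptyset$, and any element of $\mathcal{O}_t$ is the required $\omega$.

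The argument needs no earlier results; it uses only that the probability space is finite with the discrete $\sigma$-algebra $2^{\Omega}$, so every preimage is measurable and its probability is a finite sum of atoms. There is no real obstacle. The one point to state explicitly is that $\tilde{x}$ takes values in $\Real$, which ensures that $\{\tilde{x} = t\}$ is exactly the preimage $\mathcal{O}_t$ and that the sum formula applies. We could also note the stronger fact that some $\omega\in\mathcal{O}_t$ has $p_{\omega} > 0$ (since a sum of nonnegative terms is positive only if some term is positive), but the lemma does not require it.
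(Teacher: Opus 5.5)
Your proof is correct, but it takes a more direct route than the paper's. The paper argues by induction on $|\Omega|$. It drops the normalization $\bm{1}\tr\bm{p} = 1$ so that the inductive hypothesis can be applied to the restriction of $\bm{p}$ to a smaller sample space, in effect peeling off one atom at a time to show that the mass of $\{\tilde{x} = t\}$ can only come from atoms with $x(\omega) = t$. You instead identify the event with its preimage $\mathcal{O}_t$ and use the finite-sum representation $\P{\tilde{x}=t} = \sum_{\omega\in\mathcal{O}_t} p_\omega$ directly. An empty preimage then gives an empty sum and hence zero probability. Your route needs neither induction nor the relaxation of the simplex constraint; in fact it only uses $\P{\emptyset} = 0$. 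The paper's induction essentially re-derives the same finite additivity one element at a time. Your side remark that some $\omega\in\mathcal{O}_t$ must have $p_\omega > 0$ is also correct, though, as you say, the lemma does not need it.
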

\begin{proof}
The result follows by induction on the size $|\Omega|$, dropping the constraint $\bm{1}\tr \bm{p} = 1$. 
\end{proof}

\begin{algorithm}
\caption{Standard VaR algorithm}
\label{alg:slow-quantile}
\Input{$\bm{x}\in \Real^n$, $\bm{p}\in \Delta_n$, and $\alpha \in [0,1)$}
\Output{$\varo_{\alpha}[\tilde{x}]$}
Sort $\bm{x}$ indices to get $\sigma \colon 1{:}n \to 1{:}n$ such that $i > j \implies x_{\sigma(i)} \ge  x_{\sigma(j)}, \; \forall i, j \in 1{:}n$\;
$q_0 \gets 0$
\tcp*{$q_i = \P{\tilde{x} \le x_{\sigma (i)}}$ when $x_{\sigma(i)}$ unique}
\For{$i\in 1{:}n$} {
    $q_i \gets q_{i-1} + p_{\sigma(i)}$ \;
    \lIf{$q_i > \alpha$} {\KwRet{$x_{\sigma(i)}$}}
}
\end{algorithm}

\begin{proposition}
For each $\bm{x} \in \Real^n$, $\bm{p}\in \Delta_n$, and $\alpha\in [0,1)$, \cref{alg:slow-quantile} returns $\var{\alpha}{\tilde{x}}$.
\end{proposition}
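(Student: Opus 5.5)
The plan is to show that the value returned by \cref{alg:slow-quantile} satisfies the characterization~\eqref{eq:var-property-prob}. Then \cref{thm:quantile-in-finite-omega} implies it equals $\var{\alpha}{\tilde{x}}$. First I check that the algorithm always returns. Since $q_n = \sum_{j=1}^n p_{\sigma(j)} = 1 > \alpha$, there is a smallest index $i\in 1{:}n$ with $q_i > \alpha$ and $q_{i-1} \le \alpha$ (with $q_0 = 0 \le \alpha$). The algorithm returns $v := x_{\sigma(i)}$ for this smallest $i$.

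Next I relate the partial sums $q_j$ to the distribution function at $v$. Define $a := \min\{ j \mid x_{\sigma(j)} = v\}$ and $b := \max\{ j \mid x_{\sigma(j)} = v\}$, so that $a \le i \le b$. Because $\sigma$ sorts $\bm{x}$, the indices $\sigma(1{:}(a-1))$ are exactly $\{\omega \mid x_\omega < v\}$, and $\sigma(1{:}b)$ is exactly $\{\omega \mid x_\omega \le v\}$. Hence
\[
  \P{\tilde{x} < v} = q_{a-1}, \qquad \P{\tilde{x} \le v} = q_b.
\]
The partial sums are nondecreasing since $\bm{p}\ge 0$. Using $a-1 \le i-1$ and $b \ge i$, I get $\P{\tilde{x} < v} = q_{a-1} \le q_{i-1} \le \alpha$ and $\P{\tilde{x} \le v} = q_b \ge q_i > \alpha$. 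This is exactly~\eqref{eq:var-property-prob}.

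Finally, the first part of the proof of \cref{thm:quantile-in-finite-omega} shows that any $v$ satisfying~\eqref{eq:var-property-prob} attains the maximum in~\eqref{eq:var-definition}, so $v = \var{\alpha}{\tilde{x}}$.

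The only delicate step is ties, where several indices share the value $v$ and $q_i$ need not equal $\P{\tilde{x}\le v}$, as the comment in the algorithm notes. Sandwiching $i$ between $a$ and $b$ and using monotonicity of $q$ handles this without any stability assumption on the sort.
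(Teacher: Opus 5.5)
Your proof is correct and follows the same route as the paper: you establish $\P{\tilde{x} < v} \le q_{i-1} \le \alpha < q_i \le \P{\tilde{x} \le v}$ for the returned index and conclude via \cref{thm:quantile-in-finite-omega}. Your identification $\P{\tilde{x} < v} = q_{a-1}$ and $\P{\tilde{x} \le v} = q_b$, through the first and last tied positions, is an explicit verification of the inequality~\eqref{eq:proof-q-j}, which the paper only asserts follows by induction.
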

\begin{proof}
  First, one can readily show by induction that for each $i\in 1{:}n$ (note that values of $\bm{x}$ need not be unique):
\begin{equation} \label{eq:proof-q-j}
  \P{\tilde{x} < x_{\sigma(i)}} \le q_{i-1} \le q_i \le \P{\tilde{x} \le x_{\sigma(i)}}.
\end{equation}
Since $\alpha < 1 = q_n$, the algorithm is guaranteed to terminate with some $i\opt \in 1{:}n$ and return $x_{\sigma(i\opt)}$. Therefore, $q_{i\opt-1} \le \alpha < q_{i\opt}$ and applying~\eqref{eq:proof-q-j} gives us that
\[
 \P{\tilde{x} < x_{\sigma(i\opt)}} \le q_{i\opt-1} \le \alpha < q_{i\opt} \le  \P{\tilde{x} \le x_{\sigma(i\opt)}},
\]
and the result holds by \cref{thm:quantile-in-finite-omega}.
\end{proof}

\begin{algorithm}
  \caption{\texttt{partition}: Dutch national flag algorithm~\cite[chapter~14]{Dijkstra1976}}
  \label{alg:partition}
  \Input{$\bm{x} \in \Real^n$, $\bm{p} \in \Delta_n$, pivot $v\in \Real$, and front and back $f, b \in 1{:}n$, $f \le b$}
  \Output{$l, g \in f{:}b$, $\bm{x} \in \Real^n$, $\bm{p} \in \Delta_n$ such that \cref{lem:hoare-partition} holds}
  $l_0 \gets e_0 \gets f$; $g_0 \gets b$; $k \gets 0$ \;
  \While{$e_k \leq g_k$}
  {
  \label{line:partition-while}
    \If{$x_{e_k} < v$}{
      swap: $x_{e_k} \leftrightarrow x_{l_k}$ and $p_{e_k} \leftrightarrow p_{l_k}$\; \label{line:swap1}
      $l_{k+1} \gets l_k + 1$ ;
      $e_{k+1} \gets e_k + 1$\; \label{line:partition-e-1-1}
    }
    \ElseIf{$x_{e_k} > v$}{
      swap: $x_{e_k}\leftrightarrow x_{g_k}$ and $p_{e_k} \leftrightarrow p_{g_k}$\; \label{line:swap2}
      $g_{k+1} \gets g_{k} - 1$\; \label{line:partition-g-1}
    }
    \lElse {$e_{k+1} \gets e_{k} + 1$} \label{line:partition-e-1-2}
    $k \gets k + 1$
  }
  \KwRet{$(l_k, g_k, \bm{x}, \bm{p})$}\; 
\end{algorithm}

\begin{algorithm}
  \caption{\texttt{QuickVaR}: Non-recursive adaptation of \cref{alg:quick-quantile}} \label{alg:quickvar-nonrecursive}
  \Input{vector $\bm{x}^0$, pmf $\bm{p}^0$ and $\alpha_0 \in [0,1)$}
  \Output{$\varo_{\alpha}[\tilde{x}]$}
  $f_0 \gets 1$;~~~~$b_0 \gets n$;~~~~$k\gets 1$ \;
  \While{$b_{k-1} > f_{k-1}$} {
    $v_k \gets x_{\operatorname{rand}(f_{k-1}{:}b_{k-1})}$ \tcp*{Pivot} 
     $l_k, g_k, \bm{x}^k, \bm{p}^k \gets \operatorname{partition}(\bm{x}^{k-1}, \bm{p}^{k-1}, v_k, f_{k-1}, b_{k-1})$
    \tcp*{Alg.~\ref{alg:partition}, \cref{lem:hoare-partition}} \label{line:partion_line}    
    $t_k \gets \sum_{j=f_{k-1}}^{l_k-1} p^k_j$  \tcp*{$\P{\tilde{x}^k < v_k}$}
    $e_k \gets \sum_{j = l_k}^{g_k} p^{k}_j $   \tcp*{$\P{\tilde{x}^k = v_k}$}
    \lIf{$\alpha_{k-1} < t_k$\label{line:determine_left}}
    {
      $f_k \gets f_{k-1}$;~$b_k \gets l_k-1$;~
      $\alpha_k \gets \alpha_{k-1}$
    }
    \lElseIf{$t_k + e_k \le  \alpha_{k-1} \label{line:determine_right}$} 
    {
      $f_k \gets g_k+1$;~$b_k \gets b_{k-1}$;~$\alpha_k \gets \alpha_{k-1} - t_k - e_k$
    }
    \lElse{
      $f_k \gets g_k$;~~~$b_k \gets g_k$
    }
    $k \gets k + 1$\; 
  }
  \KwRet{$x^{k-1}_{b_{k-1}}$}\;
\end{algorithm}

The following lemma proves the properties of the $\operatorname{partition}$ operator defined in \cref{alg:partition}. 
\begin{lemma} \label{lem:hoare-partition}
If $(l, g, \bm{x}', \bm{p}') = \operatorname{partition}(\bm{x}, \bm{p}, v, f, b)$ for $f \le b$, then:
\begin{align} \label{eq:hoare-correspond}
  \bm{x}_{f{:}(l-1)}' < v, \qquad
  \bm{x}_{l{:}g}' = v,  \qquad
  \bm{x}_{(g+1){:}b}' > v.
\end{align}
Moreover, $ \bm{x}_{f{:}b} = \bm{x}'_{\sigma(f{:}b)}, \bm{p}_{f{:}b} = \bm{p}'_{\sigma(f{:}b)}$ for some permutation $\sigma\colon f{:}b \to f{:}b$, and $\bm{x}_{1{:}(f-1)} = \bm{x}'_{1{:}(f-1)}, \bm{p}_{1{:}(f-1)} = \bm{p}'_{1{:}(f-1)}$, and $\bm{x}_{(b+1):n} = \bm{x}'_{(b+1):n}, \bm{p}_{(b+1):n} = \bm{p}'_{(b+1):n}$. The elements outside of the range $f{:}b$ are not modified. The algorithm runs in $O(b - f + 1)$ time.
\end{lemma}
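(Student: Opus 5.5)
The plan is a loop-invariant argument for the while loop of \cref{alg:partition}, indexed by the iteration counter $k$. At the start of iteration $k$, the current arrays (still denoted $\bm{x},\bm{p}$) should satisfy
\begin{equation*}
  \bm{x}_{f{:}(l_k-1)} < v, \qquad \bm{x}_{l_k{:}(e_k-1)} = v, \qquad \bm{x}_{(g_k+1){:}b} > v, \qquad f \le l_k \le e_k \le g_k+1 \le b+1 .
\end{equation*}
The indices $e_k{:}g_k$ are the still-unclassified region. We also keep the invariant that the current $(\bm{x},\bm{p})$ is obtained from the input by a permutation of the positions $f{:}b$ that moves each pair $(x_j,p_j)$ jointly. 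At $k=0$ we have $l_0=e_0=f$ and $g_0=b$, so the three classified regions are empty and the invariant holds vacuously.

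For preservation, we split into the three branches.
\begin{itemize}
\item[(a)] If $x_{e_k}<v$, swapping positions $e_k$ and $l_k$ moves this element to position $l_k$, which extends the ``$<v$'' block. The element previously at $l_k$ is either equal to $v$ (when $l_k<e_k$) or is the same element (when $l_k=e_k$). In both cases it lands at position $e_k$, which extends the ``$=v$'' block after the increments $l_{k+1}=l_k+1$ and $e_{k+1}=e_k+1$.
\item[(b)] If $x_{e_k}>v$, the swap with position $g_k$ puts this element at $g_k$, and after $g_{k+1}=g_k-1$ it lies in the ``$>v$'' block. The element brought to $e_k$ remains unclassified.
\item[(c)] If $x_{e_k}=v$, incrementing $e_k$ extends the ``$=v$'' block.
\end{itemize}
All swaps exchange $x$ and $p$ entries simultaneously at indices inside $f{:}b$, so the joint-permutation invariant holds and entries outside $f{:}b$ are never touched. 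One point to make explicit in writing: in branches (a) and (c) the variable not updated in the listing, $g_{k+1}$, $l_{k+1}$ or $e_{k+1}$, is understood to keep its previous value.

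For termination and the conclusion, each iteration decreases $g_k-e_k$ by exactly one, so the loop runs exactly $b-f+1$ times. Each iteration does constant work, which gives the $O(b-f+1)$ bound. At exit, $e_k=g_k+1$, so the unclassified region is empty. The invariant with $l=l_k$ and $g=g_k$ then yields \eqref{eq:hoare-correspond}, because $\bm{x}_{l{:}(e_k-1)}=\bm{x}_{l{:}g}=v$.

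The one delicate point is the claim in the output specification that $l,g\in f{:}b$, which requires the ``$=v$'' block to be nonempty. The invariant alone does not give this; it follows from the pivot being an entry of $\bm{x}_{f{:}b}$, as in \cref{alg:quick-quantile}. I would add this as a hypothesis, or else note that in general $l\le g+1$ and that the lemma's use in \cref{lem:quickvar-correct} only needs the pivot to lie in the range. Since the value $v$ is present among positions $f{:}b$ and is never discarded, the ``$=v$'' block has at least one element, so $l\le g$.
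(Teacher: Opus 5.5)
Your proposal is correct and is essentially the paper's argument: the same three-block loop invariant, the same case analysis over the three branches, and the same termination argument via $g_k-e_k$ decreasing by one per iteration. You are more thorough than the paper in tracking the joint permutation of $(x_j,p_j)$ inside $f{:}b$, the ordering $f\le l_k\le e_k\le g_k+1\le b+1$, and the fact that $l,g\in f{:}b$ requires the pivot to occur in $\bm{x}_{f{:}b}$. The only slip is cosmetic: in branch (a) with $l_k=e_k$, the moved element joins the $<v$ block and the $=v$ block stays empty, which does not affect the invariant.
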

\begin{proof}[Proof of \cref{lem:hoare-partition}]
Let $v$ be an input to \cref{alg:partition} and consider the following invariants.
\begin{align}
\label{eq:ho-inv-less}   x_j &< v, & j\in f_k{:}(l_k-1), \\
\label{eq:ho-inv-equal}  x_j &= v, & j \in l_k{:}(e_k-1), \\
\label{eq:ho-inv-more}   x_j &> v, & j \in (g_k+1){:}b_k.
\end{align}
  
\emph{Initialization}. For $k=0$, conditions~\eqref{eq:ho-inv-less},~\eqref{eq:ho-inv-equal},~\eqref{eq:ho-inv-more} are true because the pertinent intervals are empty. 
  
\emph{Maintenance}. At each iteration of the loop that starts on line \ref{line:partition-while} of \cref{alg:partition}, there are 3 possible branches determined by the conditional logic: $x_{e_k} < v$, $x_{e_k} > v$, and $x_{e_k} = v$. Since $v, x_{e_k} \in \Real$ one of these branches must be executed for each iteration $k$.

Each time the $x_{e_k} < v$ or $x_{e_k} > v$ branch is executed, conditions~\eqref{eq:ho-inv-less} and~\eqref{eq:ho-inv-more}, respectively, remain true since in each case we swap the necessary elements to preserve each condition. Notice also that $e_k$ is incremented alongside $l_k$ to maintain $e_k - l_k = e_{k+1} - l_{k+1}$ as is necessary for condition~\eqref{eq:ho-inv-equal}.

When $x_{e_k} = v$, $e_k$ is incremented; since $e_0 = l_0$, this creates a difference between $e_{k+1}$ and $l_{k+1}$. The algorithm terminates when $e_t = g_t + 1$, so $g_t - l_t = e_t - l_t - 1 \ge 0$ and~\eqref{eq:ho-inv-equal} remains true.   
  
\emph{Termination}. In the loop on line \ref{line:partition-while} there are 3 possible paths of execution determined by the conditional logic $x_{e_k} < v$, $x_{e_k} > v$, and $x_{e_k} = v$. Each of these paths of execution decreases $g_k - e_k$ by 1. This occurs on lines \ref{line:partition-e-1-1} and \ref{line:partition-e-1-2} by increasing $e_k$ by 1 and keeping $g_k$ the same, or on line \ref{line:partition-g-1} by decreasing $g_k$ by 1 and keeping $e_k$ the same.
  
Observe that  $g_0 - e_0 = b - f$ is non-negative since $f\le b$. The difference $g_k-e_k$ decreases by 1 at each iteration, and each iteration requires constant time to complete. Because the algorithm returns when $g_t - e_t = -1$, it finishes in $O(b-f)$ time. 

On the last iteration $g_t = e_t$, therefore by the same reasoning as above $g_{t+1} - e_{t+1} = -1$ and condition~\eqref{eq:ho-inv-equal} holds for $g_{t+1} = e_{t+1} - 1$.
\end{proof}

\section{Supplemental Results for Section~\ref{sec:phi-divergence}}

The following proposition states the correctness of the standard polymatroid optimization algorithm when applied to a normalized polymatroid. 
\begin{proposition} \label{prop:standard-polymatroid-works}
Suppose that $f\colon  2^{1{:}n} \to \Real$ is submodular, monotone, $f(\emptyset) = 0$ and $f(1{:}n) = 1$, and let $\bm{x}\in \Real^n_{\ge 0}$. Then, \cref{alg:greedy-polymatroid} computes $\hat{\bm{q}}$ such that
  \begin{equation} \label{eq:normalized-poly-opt}
    \hat{\bm{q}}  \in  \argmin_{\bm{q}\in \widehat{\mathcal{Q}}_f}  \bm{x}\tr \bm{q}.
\end{equation}
\end{proposition}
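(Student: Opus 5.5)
We need to prove that the greedy output $\hat{\bm{q}}$ lies in $\widehat{\mathcal{Q}}_f$ and is optimal there. The plan is to reduce to the classical greedy theorem for polymatroids (Edmonds; \citet[section~14.9]{Korte2012}). That theorem is usually stated for maximization with nonnegative weights, so we adapt it by complementing the order, or we give a direct LP-duality proof.

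\emph{Feasibility.} Write $\mathcal{S}_k := \sigma(1{:}k)$. Monotonicity of $f$ gives $\hat q_{\sigma(k)} = f(\mathcal{S}_k) - f(\mathcal{S}_{k-1}) \ge 0$. Telescoping gives $\bm{1}\tr\hat{\bm{q}} = f(\mathcal{S}_n) - f(\emptyset) = 1$, so $\hat{\bm{q}} \in \Delta_n$. For the polymatroid inequalities, fix $\mathcal{U}$ and list its elements in $\sigma$-order as $u_1,\dots,u_r$ with $u_j = \sigma(k_j)$. Since $\mathcal{U}\cap\mathcal{S}_{k_j-1} \subseteq \mathcal{S}_{k_j-1}$, submodularity (in its diminishing-returns form) gives
\[
\hat q_{u_j} = f(\mathcal{S}_{k_j-1}\cup\{u_j\}) - f(\mathcal{S}_{k_j-1}) \le f(\{u_1,\dots,u_j\}) - f(\{u_1,\dots,u_{j-1}\}).
\]
Summing over $j$ telescopes to $\bm{1}\tr\hat{\bm{q}}_{\mathcal{U}} \le f(\mathcal{U}) - f(\emptyset) = f(\mathcal{U})$.

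\emph{Optimality.} Take any $\bm{q}\in\widehat{\mathcal{Q}}_f$ and use Abel summation with $y_k := x_{\sigma(k)}$, which is nondecreasing in $k$, and $x_{\sigma(n+1)}$ treated as absent:
\[
\bm{x}\tr\bm{q} = y_n\,\bm{1}\tr\bm{q}_{\mathcal{S}_n} - \sum_{k=1}^{n-1}(y_{k+1}-y_k)\,\bm{1}\tr\bm{q}_{\mathcal{S}_k}.
\]
The same identity holds for $\hat{\bm{q}}$. Now compare term by term. First, $\bm{1}\tr\bm{q}_{\mathcal{S}_n} = 1 = \bm{1}\tr\hat{\bm{q}}_{\mathcal{S}_n}$. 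Second, each coefficient $y_{k+1}-y_k$ is nonnegative, and $\bm{1}\tr\bm{q}_{\mathcal{S}_k} \le f(\mathcal{S}_k) = \bm{1}\tr\hat{\bm{q}}_{\mathcal{S}_k}$, where the equality is by telescoping. Hence $\bm{x}\tr\bm{q} \ge \bm{x}\tr\hat{\bm{q}}$. This argument also shows that the hypothesis $\bm{x}\ge 0$ is not needed, because the simplex constraint fixes the top term.

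The main step is the feasibility inequality for an arbitrary $\mathcal{U}$. The key point there is to use submodularity in the ``marginal gain decreases on supersets'' form, which follows from the definition applied with $\mathcal{A} = \mathcal{S}_{k_j-1}$ and $\mathcal{B} = \{u_1,\dots,u_j\}$. Ties in $\bm{x}$ need no special care, because the argument never uses strict order.
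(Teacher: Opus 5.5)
Your proof is correct, and it takes a more self-contained route than the paper.

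\emph{The paper's route.} It gets $\hat{\bm{q}}\in\mathcal{Q}_f$ by citing Proposition~14.10 of Korte and Vygen. It gets $\bm{1}\tr\hat{\bm{q}}=1$ by telescoping, as you do. For optimality, it shifts the objective to the nonnegative weights $x_{\max}\bm{1}-\bm{x}$ and relaxes $\widehat{\mathcal{Q}}_f$ to the larger polymatroid $\mathcal{Q}_f$. It then invokes Edmonds' greedy theorem (Theorem~14.11) for maximization over $\mathcal{Q}_f$, and uses $\bm{1}\tr\hat{\bm{q}}=1$ to undo the shift.

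\emph{Your route.} You prove both ingredients directly.
\begin{itemize}
\item Feasibility: the diminishing-returns inequality gives it. You derive that inequality correctly from the lattice form of submodularity with $\mathcal{A}=\mathcal{S}_{k_j-1}$ and $\mathcal{B}=\{u_1,\dots,u_j\}$.
\item Optimality: your summation-by-parts identity is exactly the LP-duality certificate for the greedy solution. It places multipliers $y_{k+1}-y_k\ge 0$ on the chain constraints $\bm{1}\tr\bm{q}_{\mathcal{S}_k}\le f(\mathcal{S}_k)$ and a free multiplier on the equality $\bm{1}\tr\bm{q}=1$.
\end{itemize}

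\emph{What each route buys.} Your argument shows that only the nested prefix constraints and the simplex equality matter, and it handles ties transparently. It also avoids reconciling \cref{alg:greedy-polymatroid} with the textbook greedy. The textbook version sets coordinates of nonpositive weight to zero, whereas \cref{alg:greedy-polymatroid} assigns every element its marginal. This is harmless in the paper's proof, since those coordinates have weight $0$ after the shift, but it is a step the paper leaves implicit. The paper's route is shorter because it delegates the work to standard results.

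Your remark that the hypothesis $\bm{x}\ge 0$ is unnecessary is correct. It applies equally to the paper's argument, because the shift by $x_{\max}$ makes the weights nonnegative for any $\bm{x}$.
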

\begin{proof}
First, we show that $\hat{\bm{q}} \in \widehat{\mathcal{Q}}_f$. Because the construction of $\hat{\bm{q}}$ in \cref{alg:greedy-polymatroid} matches the construction in proposition~14.10 in \citet{Korte2012}, since $f(\emptyset) = 0$ and $\hat{q}_{\sigma(1)} = f(\left\{ \sigma(1) \right\}) - f(\emptyset) = f(\left\{ \sigma(1) \right\})$, we have that $\hat{\bm{q}}  \in \mathcal{Q}_f$. In addition, 
  \[
   \sum_{i=1}^n \hat{q}_i =  
   \sum_{i=1}^n \hat{q}_{\sigma (i)} = f(1{:}n) - f(\emptyset) = 1, 
  \]
which implies that $\hat{\bm{q}} \in \Delta_n \cap \mathcal{Q}_f = \widehat{\mathcal{Q}}_f$, $\widehat{\mathcal{Q}}_f \neq \emptyset$, and the minimum~\eqref{eq:normalized-poly-opt} exists.

To show that $\hat{\bm{q}} $ is optimal, we reduce the optimization in~\eqref{eq:normalized-poly-opt} to the standard linear polymatroid maximization problem with non-negative weights~\cite[section 14.9]{Korte2012} as follows:
\begin{align*}
  \min_{\bm{q}\in \widehat{\mathcal{Q}}_f}  \bm{x}\tr \bm{q}
  &= \min_{\bm{q}\in \widehat{\mathcal{Q}}_f}  (\bm{x} - x_{\max}\cdot \bm{1}) \tr \bm{q} +  x_{\max}\\
  &= -\max_{\bm{q}\in \widehat{\mathcal{Q}}_f}  (x_{\max}\cdot \bm{1} - \bm{x}) \tr \bm{q} + x_{\max}\\
  &\ge -\max_{\bm{q}\in \mathcal{Q}_f}  (x_{\max}\cdot \bm{1} - \bm{x}) \tr \bm{q} + x_{\max}\\
  &= - (x_{\max}\cdot \bm{1} - \bm{x}) \tr \hat{\bm{q}} + x_{\max} \\
  &=  \bm{x}\tr \hat{\bm{q}}.
\end{align*}
Here, $x_{\max} := \max_{i\in 1{:}n} x_i $ and the equalities follow because $\widehat{\mathcal{Q}}_f$ is a normalized polymatroid and $\bm{1}\tr \hat{\bm{q}} = 1$. The second-to-last equality follows from theorem~14.11 in \citet{Korte2012} and the construction in the greedy polymatroid algorithm.
\end{proof}

The following lemma is folklore in submodular optimization~\cite[theorem~4.5.1]{Bilmes18}. We state and prove it for completeness.
\begin{lemma} \label{lem:submodular-concave}
  Suppose that $\bm{\mu}\in \Real^n_{\ge 0}$ and $g \colon \Real_{\ge 0} \to \Real$ is concave and monotone. Then $f(\mathcal{A}) := g(\bm{1}\tr\bm{\mu}_{\mathcal{A}})$ is a submodular monotone function. 
\end{lemma}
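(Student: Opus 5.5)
We need to prove that $f(\mathcal{A}) = g(\bm{1}\tr\bm{\mu}_{\mathcal{A}})$ is monotone and submodular when $g$ is concave and monotone on $\Real_{\ge 0}$. Write $\mu(\mathcal{A}) := \bm{1}\tr\bm{\mu}_{\mathcal{A}}$. This set function is modular, meaning $\mu(\mathcal{A}\cup\mathcal{B}) + \mu(\mathcal{A}\cap\mathcal{B}) = \mu(\mathcal{A}) + \mu(\mathcal{B})$. It is also monotone, since $\bm{\mu}\ge 0$.

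\emph{Monotonicity.} If $\mathcal{A}\subseteq\mathcal{B}$, then $\mu(\mathcal{A})\le\mu(\mathcal{B})$. Monotonicity of $g$ then gives $f(\mathcal{A})\le f(\mathcal{B})$.

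\emph{Submodularity.} Fix $\mathcal{A},\mathcal{B}$ and set
\[
a := \mu(\mathcal{A}\cap\mathcal{B}),\quad s := \mu(\mathcal{A}),\quad t := \mu(\mathcal{B}),\quad u := \mu(\mathcal{A}\cup\mathcal{B}).
\]
Modularity gives $a + u = s + t$, and monotonicity of $\mu$ gives $a \le \min\{s,t\} \le \max\{s,t\} \le u$. So it suffices to prove the one-dimensional claim that, for concave $g$ and $a\le s,t\le u$ with $a+u=s+t$,
\[
g(s)+g(t)\ge g(a)+g(u).
\]

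If $u = a$, then all four points coincide and the claim is trivial. Otherwise, write $s = \lambda a + (1-\lambda) u$ with $\lambda = (u-s)/(u-a)\in[0,1]$. The constraint $s+t = a+u$ forces $t = (1-\lambda) a + \lambda u$. Concavity then gives
\[
g(s)\ge \lambda g(a) + (1-\lambda) g(u), \qquad g(t)\ge (1-\lambda) g(a) + \lambda g(u),
\]
and adding the two inequalities yields the claim. Substituting back gives $f(\mathcal{A})+f(\mathcal{B})\ge f(\mathcal{A}\cup\mathcal{B})+f(\mathcal{A}\cap\mathcal{B})$.

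\emph{Domain subtlety.} The argument only needs concavity of $g$ on $[0,\bm{1}\tr\bm{\mu}]$, which contains all four points. This is how the lemma is used in the preceding proposition, where $g$ was shown to be concave on $[0,1]$ and $\bm{\mu}\in\Delta_n$. The same reasoning also gives nonnegativity of $f$ whenever $g\ge 0$ on that interval, as required by Definition~\ref{def:polymatroid}.
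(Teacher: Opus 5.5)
Your proof is correct and follows the same route as the paper. Monotonicity comes from composing monotone maps. Submodularity comes from writing $\bm{1}\tr\bm{\mu}_{\mathcal{A}}$ and $\bm{1}\tr\bm{\mu}_{\mathcal{B}}$ as convex combinations of the intersection and union values (with the degenerate case $u=a$ handled separately), applying concavity, and summing using modularity. Your explicit observation that modularity forces the complementary weights $(1-\lambda,\lambda)$ for $t$ simply spells out the step the paper calls ``algebraic manipulation.''
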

\begin{proof}
  The fact that $f$ is monotone follows from the composition of two monotone functions. To prove that $f$ is submodular, we use the following properties that hold for each $\mathcal{A}, \mathcal{B} \subseteq 1{:}n$:
  \begin{align}
    \label{eq:sets-bounds-nneg}
    \bm{1}\tr\bm{\mu}_{\mathcal{A} \cap \mathcal{B}}
    &\le \bm{1}\tr\bm{\mu}_{\mathcal{A}} \le \bm{1}\tr\bm{\mu}_{\mathcal{A} \cup \mathcal{B}},
    &
    \bm{1}\tr\bm{\mu}_{\mathcal{A} \cap \mathcal{B}}
    &\le \bm{1}\tr\bm{\mu}_{\mathcal{B}} \le \bm{1}\tr\bm{\mu}_{\mathcal{A} \cup \mathcal{B}}, \\
    \label{eq:mu-modular}
    \bm{1}\tr\bm{\mu}_{\mathcal{A}} + \bm{1}\tr\bm{\mu}_{\mathcal{B}}
    &= \bm{1}\tr\bm{\mu}_{\mathcal{A} \cup \mathcal{B}} + \bm{1}\tr\bm{\mu}_{\mathcal{A} \cap \mathcal{B}}.
  \end{align}
The inequalities in~\eqref{eq:sets-bounds-nneg} hold because $\bm{\mu} \ge  \bm{0}$ and the equality in~\eqref{eq:mu-modular} holds because $\mathcal{A} \mapsto \bm{1}\tr\bm{\mu}_{\mathcal{A}}$ is modular. Suppose that $\bm{1}\tr \bm{\mu}_{\mathcal{A} \cap \mathcal{B}} < \bm{1}\tr\bm{\mu}_{\mathcal{A} \cup \mathcal{B}}$; otherwise the submodularity condition in~\eqref{eq:submod-proof-cond} below holds with equality.  Then, writing $\bm{1}\tr\bm{\mu}_{\mathcal{A}}$ and $\bm{1}\tr\bm{\mu}_{\mathcal{B}}$ as a convex combination of $\bm{1}\tr\bm{\mu}_{\mathcal{A}\cap \mathcal{B}}$ and $\bm{1}\tr\bm{\mu}_{\mathcal{A} \cup \mathcal{B}}$ using~\eqref{eq:sets-bounds-nneg} and the concavity of $g$ we get that
  \begin{align}
  \label{eq:ga-bound}
    g(\bm{1}\tr\bm{\mu}_{\mathcal{A}})
    &\ge
      \frac{\bm{1}\tr\bm{\mu}_{\mathcal{A} \cup \mathcal{B}} - \bm{1}\tr\bm{\mu}_{\mathcal{A}}}{\bm{1}\tr\bm{\mu}_{\mathcal{A} \cup \mathcal{B}} - \bm{1}\tr\bm{\mu}_{\mathcal{A} \cap \mathcal{B}}} g(\bm{1}\tr\bm{\mu}_{\mathcal{A} \cap \mathcal{B}}) + 
      \frac{\bm{1}\tr\bm{\mu}_{\mathcal{A}} - \bm{1}\tr\bm{\mu}_{\mathcal{A} \cap \mathcal{B}}}{\bm{1}\tr\bm{\mu}_{\mathcal{A} \cup \mathcal{B}} - \bm{1}\tr\bm{\mu}_{\mathcal{A} \cap \mathcal{B}}} g(\bm{1}\tr\bm{\mu}_{\mathcal{A} \cup \mathcal{B}}), \\
  \label{eq:gb-bound}
    g(\bm{1}\tr\bm{\mu}_{\mathcal{B}})
    &\ge
      \frac{\bm{1}\tr\bm{\mu}_{\mathcal{A} \cup \mathcal{B}} - \bm{1}\tr\bm{\mu}_{\mathcal{B}}}{\bm{1}\tr\bm{\mu}_{\mathcal{A} \cup \mathcal{B}} - \bm{1}\tr\bm{\mu}_{\mathcal{A} \cap \mathcal{B}}} g(\bm{1}\tr\bm{\mu}_{\mathcal{A} \cap \mathcal{B}}) + 
      \frac{\bm{1}\tr\bm{\mu}_{\mathcal{B}} - \bm{1}\tr\bm{\mu}_{\mathcal{A} \cap \mathcal{B}}}{\bm{1}\tr\bm{\mu}_{\mathcal{A} \cup \mathcal{B}} - \bm{1}\tr\bm{\mu}_{\mathcal{A} \cap \mathcal{B}}} g(\bm{1}\tr\bm{\mu}_{\mathcal{A} \cup \mathcal{B}}).
  \end{align}
  Then summing~\eqref{eq:ga-bound} and~\eqref{eq:gb-bound} and algebraic manipulation yields that 
  \begin{equation} \label{eq:submod-proof-cond}
    f(\mathcal{A}) + f(\mathcal{B})
    = g(\bm{1}\tr\bm{\mu}_{\mathcal{A}}) + g(\bm{1}\tr\bm{\mu}_{\mathcal{B}})
    \ge
      g(\bm{1}\tr\bm{\mu}_{\mathcal{A} \cap \mathcal{B}}) + 
      g(\bm{1}\tr\bm{\mu}_{\mathcal{A} \cup \mathcal{B}})
     = 
      f(\mathcal{A} \cap \mathcal{B}) + 
      f(\mathcal{A} \cup \mathcal{B}),
\end{equation}
    which proves that $f$ is submodular. 
\end{proof}

\begin{proof}[Proof of \cref{prop:cvar-ews-correct}]
The function $f_{\alpha }^{\cvaro}$ is EWS by algebraic manipulation. The equality~\eqref{eq:cvar-correct} for $\alpha = 0$ follows by algebraic manipulation. To prove~\eqref{eq:cvar-correct} for $\alpha > 0$, we use \cref{prop:coherent-polymatroid} and show for each $\mathcal{O} \subseteq \Omega$ that
\begin{equation}
\label{eq:cvar-eq-submodular}
  f_{\alpha}^{\cvaro}(\mathcal{O})
  =
  \min\left\{ \alpha^{-1} \bm{1}_{\mathcal{O}}\tr \bm{p},\; 1 \right\}
= \max \left\{ \bm{1}_{\mathcal{O}}\tr \bm{q} \mid 
\bm{q} \in \Delta_{n}, \,
\bm{q} \le \alpha^{-1} \bm{p}
\right\}
=
  -\cvar{\alpha }{- \tilde{1}_{\mathcal{O}}},
\end{equation}
using the dual CVaR formulation in~\eqref{eq:primal_cvar}. We prove~\eqref{eq:cvar-eq-submodular} as two inequalities.

(i) The inequality 
\[
\min\left\{ \alpha^{-1} \bm{1}_{\mathcal{O}}\tr \bm{p},\; 1 \right\}
\ge \max \left\{ \bm{1}_{\mathcal{O}}\tr \bm{q} \mid \bm{q} \in \Delta_{n}, \, \bm{q} \le \alpha^{-1} \bm{p}
\right\},
\]
holds because $\bm{q}$ feasible in the right-hand side satisfies that 
\(  \bm{1}_{\mathcal{O}}\tr \bm{q} \le \alpha^{-1} \bm{1}_{\mathcal{O}}\tr \bm{p},
 \text{ and }
  \bm{1}_{\mathcal{O}}\tr \bm{q} \le \bm{1}\tr \bm{q} = 1.
\)

(ii) To prove that 
\begin{equation} \label{eq:cvar-proof-le}
\min\left\{ \alpha^{-1} \bm{1}_{\mathcal{O}}\tr \bm{p},\; 1 \right\}
\le \max \left\{ \bm{1}_{\mathcal{O}}\tr \bm{q} \mid \bm{q} \in \Delta_{n}, \, \bm{q} \le \alpha^{-1} \bm{p}
\right\},
\end{equation}
we analyze the following two cases. 
\begin{enumerate}[nosep]
\item[(a)] When $\bm{1}_{\mathcal{O}}\tr  \bm{p} \ge \alpha$, let \(\bm{q} :=  (\bm{1}_{\mathcal{O}}\tr \bm{p})^{-1} \cdot \bm{p} \odot \bm{1}_{\mathcal{O}},
\) which is
feasible in the right-hand side of~\eqref{eq:cvar-proof-le} and $\odot$ is the Hadamard vector product. 
\item[(b)] When $\bm{1}_{\mathcal{O}}\tr  \bm{p} < \alpha$, let \(\bm{q} := \alpha^{-1} \cdot \bm{p} \odot \bm{1}_{\mathcal{O}}
+  d \cdot \bm{p} \odot (\bm{1} - \bm{1}_{\mathcal{O}}), \) which is
feasible in the right-hand side of~\eqref{eq:cvar-proof-le} and $d\in \Real $ solves that
\(d \cdot \bm{1}\tr \bm{p} \odot (\bm{1} - \bm{1}_{\mathcal{O}}) = 1 - \alpha^{-1} \bm{1}_{\mathcal{O}}\tr \bm{p}.
\)
Such $d$ always exists because \(\bm{1}\tr \bm{p} \odot (\bm{1} - \bm{1}_{\mathcal{O}}) = 1 - \bm{1}_{\mathcal{O}}\tr \bm{p}  > 1 - \alpha \ge 0.
\)
It is easy to verify that $\bm{q} \ge \bm{0}$ and $\bm{1}\tr \bm{q} = 1$. The condition $\bm{q} \le \alpha^{-1} \bm{p}$ holds because $d \le \alpha^{-1}$ is satisfied by multiplying both sides of the inequality by $\bm{1}\tr \bm{p} \odot (\bm{1} - \bm{1}_{\mathcal{O}})$:
\[
  d \cdot \bm{1}\tr \bm{p} \odot (\bm{1} - \bm{1}_{\mathcal{O}})
  \; \le\; 
  1 - \frac{1}{\alpha } \bm{1}_{\mathcal{O}}\tr \bm{p}
  \; \le\; 
  \frac{1}{\alpha } \cdot \bm{1}\tr \bm{p} \odot (\bm{1} - \bm{1}_{\mathcal{O}}).
\]
\end{enumerate}
\end{proof}

\begin{proof}[Proof of \cref{prop:tvar-ews-correct}]
  The function $f_{\alpha }^{\tvaro}$ is EWS by algebraic manipulation. The equality~\eqref{eq:tvar-correct} for $\alpha$ such that $\sqrt{\frac{1}{2} \log \frac{1}{\alpha} } \ge 1$, including $\alpha = 0$, follows by algebraic manipulation.

  It remains to prove~\eqref{eq:tvar-correct} when $\sqrt{\frac{1}{2} \log \frac{1}{\alpha} } < 1$. We use \cref{prop:coherent-polymatroid} and show for each $\mathcal{O} \subseteq \Omega$ that
\begin{equation}\label{eq:tvar-eq-submodular}
  \begin{aligned}
f_{\alpha}^{\tvaro}(\mathcal{O}) 
  &=
\min\left\{ \bm{1}_{\mathcal{O}}\tr \bm{p} + \sqrt{\frac{1}{2} \log \frac{1}{\alpha}},\; 1 \right\} \\
  &=
\max \left\{ \bm{1}_{\mathcal{O}}\tr \bm{q} \mid 
\bm{q} \in \Delta_n, \,
\|\bm{p} -\bm{q}\|_1 \le \sqrt{2 \log \frac{1}{\alpha}} 
\right\}
=
  -\tvar{\alpha }{- \tilde{1}_{\mathcal{O}}},
  \end{aligned}
\end{equation}
using the TVaR definition in~\eqref{eq:tvar-dual}. We need to handle two special cases first. When $\bm{1}_{\mathcal{O}}\tr \bm{p} = 0$, then we have that $0 = f^{\tvaro}_{\alpha}(\mathcal{O}) = g(0) = -\tvaro_{\alpha}[-\tilde{1}_{\mathcal{O}}]$ since $\bm{q} \ll \bm{p}$ in the TVaR definition in~\eqref{eq:tvar-dual}. When $\bm{1}_{\mathcal{O}}\tr \bm{p} = 1$, then we have that $1 = f^{\tvaro}_{\alpha}(\mathcal{O}) = g(1) = -\tvaro_{\alpha}[-\tilde{1}_{\mathcal{O}}]$ since $\bm{q} \ll \bm{p}$ and $\bm{1}\tr \bm{q} = 1$ in the TVaR definition in~\eqref{eq:tvar-dual}.

Finally, we prove~\eqref{eq:tvar-eq-submodular} as two inequalities when $0 < \bm{1}_{\mathcal{O}}\tr \bm{p} < 1$. 
(i) To prove that 
\[
  \min\left\{ \bm{1}_{\mathcal{O}}\tr \bm{p} + \sqrt{\frac{1}{2} \log \frac{1}{\alpha}},\; 1 \right\}
  \ge \max \left\{ \bm{1}_{\mathcal{O}}\tr \bm{q} \mid
\bm{q} \in \Delta_n, \,
\|\bm{p} -\bm{q}\|_1 \le \sqrt{2 \log \frac{1}{\alpha}}
\right\},
\]
observe that $\bm{q}$ feasible in the right-hand side implies that \(  \|\bm{p} - \bm{q}\|_1 = \bm{1}_{\mathcal{O}}\tr |\bm{p} - \bm{q}| + (\bm{1} - \bm{1}_{\mathcal{O}})\tr |\bm{p} - \bm{q}|
  \leq \sqrt{2 \log \frac{1}{\alpha }} \) and $(\bm{1} - \bm{1}_{\mathcal{O}})\tr (\bm{p} - \bm{q}) = \bm{1}_{\mathcal{O}}\tr  (\bm{q} - \bm{p})$ since $\bm{p},\bm{q}\in \Delta_n$.  Then
\begin{align*}
  \bm{1}_{\mathcal{O}}\tr \bm{q}
  &= \bm{1}_{\mathcal{O}}\tr (\bm{q} - \bm{p}) + \bm{1}_{\mathcal{O}}\tr  \bm{p} 
  = \frac{1}{2}\left(  \bm{1}_{\mathcal{O}}\tr (\bm{q} - \bm{p})  + (\bm{1} - \bm{1}_{\mathcal{O}})\tr  (\bm{p} - \bm{q}) \right)+ \bm{1}_{\mathcal{O}}\tr  \bm{p} \\
  &\le \frac{1}{2}\left(  \bm{1}_{\mathcal{O}}\tr |\bm{q} - \bm{p}|  + (\bm{1} - \bm{1}_{\mathcal{O}})\tr |\bm{p} - \bm{q}| \right)+ \bm{1}_{\mathcal{O}}\tr  \bm{p} 
  = \sqrt{\frac{1}{2} \log \frac{1}{\alpha }} + \bm{1}_{\mathcal{O}}\tr  \bm{p},
\end{align*}
and  \(  \bm{1}_{\mathcal{O}}\tr \bm{q} \leq \bm{1}\tr \bm{q}  = 1.\)

(ii) To prove that 
\[
  \min\left\{ \bm{1}_{\mathcal{O}}\tr \bm{p} + \sqrt{\frac{1}{2} \log \frac{1}{\alpha }},\; 1 \right\}
\le \max \left\{ \bm{1}_{\mathcal{O}}\tr \bm{q} \mid 
\bm{q} \in \Delta_n, \,
\|\bm{p} -\bm{q}\|_{1} \le  \sqrt{2 \log \frac{1}{\alpha }}
\right\},
\]
we analyze the following two cases:
\begin{enumerate}[nosep]
\item When $\bm{1}_{\mathcal{O}}\tr  \bm{p} +  \sqrt{\frac{1}{2} \log \frac{1}{\alpha }} \ge 1$, let \(\bm{q} \;:=\;  (\bm{1}_{\mathcal{O}}\tr \bm{p})^{-1} \cdot \bm{p} \odot \bm{1}_{\mathcal{O}}, \)
  where $\odot$ is the Hadamard vector product. This $\bm{q}$ is feasible in the right-hand side and satisfies the inequality. 
\item When $\bm{1}_{\mathcal{O}}\tr  \bm{p}  + \sqrt{\frac{1}{2} \log \frac{1}{\alpha }}< 1$, let 
\[
\bm{q} \;:=\; \frac{\bm{1}_{\mathcal{O}}\tr \bm{p} + \sqrt{\frac{1}{2} \log \frac{1}{\alpha }}}{\bm{1}_{\mathcal{O}}\tr \bm{p}} \cdot \bm{p} \odot \bm{1}_{\mathcal{O}}
+  \frac{(\bm{1} - \bm{1}_{\mathcal{O}}) \tr \bm{p} - \sqrt{\frac{1}{2} \log \frac{1}{\alpha }} }{(\bm{1} - \bm{1}_{\mathcal{O}}) \tr \bm{p}}\cdot \bm{p} \odot (\bm{1} - \bm{1}_{\mathcal{O}}),
\]
which is feasible in the right-hand side and satisfies the inequality. Neither one of the divisions is by $0$ since $0 < \bm{1}_{\mathcal{O}}\tr \bm{p} < 1$.
\end{enumerate}
\end{proof}

\end{document}